\documentclass[a4paper,runningheads]{llncs}
\ifdefined\pdfminorversion
\fi
\usepackage[T1]{fontenc}
\usepackage{graphicx}
\usepackage{amsmath}
\usepackage{amssymb}
\usepackage{booktabs}
\usepackage{xurl}
\usepackage{algorithm}
\usepackage{algpseudocode}
\AtBeginDocument{
  \setlength{\abovedisplayskip}{6pt plus 2pt minus 2pt}
  \setlength{\belowdisplayskip}{6pt plus 2pt minus 2pt}
  \setlength{\abovedisplayshortskip}{3pt plus 1pt minus 1pt}
  \setlength{\belowdisplayshortskip}{3pt plus 1pt minus 1pt}
}
\allowdisplaybreaks
\usepackage{hyperref}
\usepackage{color}

\usepackage[numbers]{natbib}

\begin{document}
\title{Parameter Learning in Active Inference via Variational Message Passing}
\titlerunning{Parameter Learning in Active Inference via Variational Message Passing}
\newcommand{\at}{\char64}
\newcommand{\period}{\char46}
\newcommand{\myemail}{\texttt{benedikt\period{}waldeck\at{}unimore\period{}it}}
\author{Benedikt Waldeck\inst{1}\orcidID{0009-0001-5225-9933}}
\authorrunning{B. Waldeck}
\institute{University of Modena and Reggio Emilia, Italy\\
\email{\myemail}}
\maketitle
\begin{abstract}
Active inference casts perception, learning and planning as variational inference, where the Expected Free Energy (EFE) defines the planning objective. Recent work showed that EFE minimization can be written as Variational Free Energy (VFE) minimization on a generative model augmented with epistemic priors, and that the resulting entropy-corrected objective is minimized by sum-product message passing (SP-MP) on modified factor functions. These formulations treat model uncertainty as \emph{inference} over a finite set of known models. We extend them to planning during \emph{learning}, where beliefs over the observation and transition parameters $\theta_A$, $\theta_B$ are Dirichlet priors. Closed-form Dirichlet updates are usually obtained by a structured mean-field factorization between parameters and states, but under exactly that factorization the information-gain corrections vanish. We resolve this by giving every factor its own parameter prior, frozen during a rollout. Conjugacy then makes each factor belief a product of a marginal over the categorical variables and a one-step Bayes posterior of the parameters. This reintroduces the datum-dependent parameter posteriors that mean-field factorization destroys. We show that evaluating the epistemic priors under the resulting Bethe joint recovers the information gains. Stationarity of this single closed-form objective yields SP-MP whose factor functions are augmented by the information-gain weight. Experiments on two grid POMDPs, in which only a planner that learns the model finds the safe route, show that the method plans to efficiently learn $\theta_A$ and $\theta_B$. It reaches the goal on every episode, whereas SP-MP planners with the same inference update plateau or take the risky route.

\keywords{Active Inference \and Parameter Learning \and Planning as Inference \and Factor Graphs \and Variational Inference}
\end{abstract}
\section{Introduction}
\label{sec:introduction}

Active inference casts perception, planning and learning as variational inference on a single generative model~\cite{Da_Costa_2020}. Its planning objective, the Expected Free Energy (EFE), decomposes into risk, ambiguity and novelty and thereby balances goal pursuit against epistemic exploration, but evaluating it by enumerating policies scales exponentially with the horizon~\cite{friston2016active}. De Vries et al.~\cite{devries2025expectedfreeenergybasedplanning} showed that EFE minimization can be reformulated as Variational Free Energy (VFE) minimization on a generative model augmented with \emph{epistemic priors}. Nuijten et al.~\cite{nuijten2026whattypeinference} showed that the augmented VFE equals the predictive VFE plus explicit \emph{entropy corrections}, including the planning correction of Lázaro-Gredilla et al.~\cite{lázarogredilla2025typeinferenceplanning}. They derived a message passing scheme in which each correction becomes a \emph{channel reparameterization} of a factor function, so that EFE-based planning runs as sum-product message passing (SP-MP) on the original factor graph.

These formulations treat the model parameter $\theta$ as a single enumerable latent variable: they plan under uncertainty over a finite set of known models, not under uncertainty about the models themselves. Planning during \emph{Learning} the observation model $\theta_A$ and the transition model $\theta_B$ conjugately, an agent that knows neither its sensors nor its actuators, is left open and poses a structural problem. Closed-form Dirichlet updates require a mean-field factorization between the parameter beliefs and the state--observation blocks, but under exactly that factorization every parameter-posterior contrast collapses, $q(\theta\mid\cdot)=q(\theta)$, and the novelty term that motivates parameter learning vanishes from the objective.

This paper resolves this problem within a single variational objective. We split $\theta$ into an observation block $\theta_A$, a transition block $\theta_B$, and an enumerable context $\theta_c$, and extend the entropy-corrected formulation to conjugate learning of $\theta_A$ and $\theta_B$. Our contributions are:
\begin{itemize}
    \item We show that the channels of the entropy corrections need not be free variational parameters. Substituting them at their closed-form optima, the conditionals of $q$, and giving each factor a step-wise parameter prior frozen at the current belief, keeps every sum-product factor belief conjugate and preserves the local parameter–outcome joints that mean-field destroys. Under the resulting Bethe joint the objective is a single scalar over the discrete coordinates.
    \item We derive the resulting message passing scheme. Four epistemic priors recover the EFE with separate novelty terms for $\theta_A$ and $\theta_B$ and a salience term for $\theta_c$. Stationarity yields belief propagation on the discrete graph with factor functions augmented with per-entry \emph{information-gain weights} $e^{D_{\mathrm{KL}}}\ge1$. The fixed-parameter scheme of~\cite{nuijten2026messagepassingrealizationexpected,nuijten2026whattypeinference} is recovered in the point-mass limit.
    \item We validate the method on two grid POMDPs, in which only a planner that explores the model, finds the safe route. The EFE-planner learns $\theta_A$ and $\theta_B$ and reaches the goal on every episode, whereas SP-MP baselines plateau on parameter learning or take the risky routes. Enumerated EFE reproduces its behaviour, and ablations isolate the novelty terms as the drive.
\end{itemize}
Section~\ref{sec:background} reviews the EFE and the Bethe free energy. Section~\ref{sec:related} discusses related work. Section~\ref{sec:methodology} derives the objective and the message passing scheme. Section~\ref{sec:experiments} reports the experiments and Section~\ref{sec:conclusion} concludes.

\section{Background}
\label{sec:background}

\subsection{Active Inference and the Expected Free Energy}
\label{sec:ai_efe}

We consider an agent that maintains a generative model predicting future observations, states and the consequences of actions over a planning horizon $T$. For a discrete partially observable Markov decision process (POMDP) with hidden states $\boldsymbol{x}$, observations $\boldsymbol{y}$, actions $\boldsymbol{u}$ and model parameters $\theta$. We write this as a rollout model $p(\boldsymbol{y},\boldsymbol{x},\boldsymbol{u},\theta) = p(\theta)\,p(x_0)\prod_{t=1}^{T} p(u_t)\,p(y_t\mid x_t,\theta)\,p(x_t\mid x_{t-1},u_t,\theta)$. To encode goals, the model is augmented with \emph{preference priors} $\hat{p}(x_t)$, $\hat{p}(y_t)$ over desired states and observations, which can be understood as exponentiated rewards $\hat{p}\propto\exp(R)$~\cite{Da_Costa_2020}.

Active inference selects actions by minimizing the Expected Free Energy (EFE), which scores a policy by the cost of its predicted trajectory and decomposes into three interpretable terms~\cite{Da_Costa_2020}. Per time step,
\begin{equation}
\label{eq:efe_decomp}
\begin{aligned}
G(u_t) = \underbrace{D_{\mathrm{KL}}\!\big[q(x_t\mid u_t)\,\|\,\hat{p}(x_t)\big]}_{\text{risk (states)}}
&+ \underbrace{\mathbb{E}_{q(x_t\mid u_t)}\!\big[\mathbb{H}[p(y_t\mid x_t)]\big]}_{\text{ambiguity}} \\
&- \underbrace{\mathbb{E}_{p(y_t\mid x_t)\,q(x_t\mid u_t)}\!\big[D_{\mathrm{KL}}[q(\theta\mid y_t,x_t)\,\|\,q(\theta)]\big]}_{\text{novelty}}.
\end{aligned}
\end{equation}
Risk penalizes deviation of the predicted states from the preferences, ambiguity penalizes states whose observations are uncertain, and novelty rewards states and observations that are informative about $\theta$. In discrete active inference, policies (sequences of actions) are typically scored with~\eqref{eq:efe_decomp} and selected in proportion to $\exp(-G)$~\cite{friston2016active}.

\subsection{Variational Free Energy and Message Passing on Factor Graphs}
\label{sec:vi_ffg}

Given a generative model $p$, variational inference approximates its posterior by minimizing the Variational Free Energy (VFE) over a family of tractable distributions $q$:
\begin{equation}
\label{eq:vfe}
F[q] = D_{\mathrm{KL}}\!\big[q(\boldsymbol{y},\boldsymbol{x},\boldsymbol{u},\theta)\,\|\,p(\boldsymbol{y},\boldsymbol{x},\boldsymbol{u},\theta)\big].
\end{equation}
The generative model factorizes into local terms and can be represented as a Forney-style factor graph (FFG)~\cite{senoz2021variational}, in which nodes $a\in\mathcal{V}$ represent factors $f_a$, edges $i\in\mathcal{E}$ represent variables, and $d_i$ is the degree of edge $i$. The \emph{Bethe approximation} constrains $q$ to respect this structure: each node maintains a belief $q_a(\boldsymbol{s}_a)$ over its adjacent variables and each edge a singleton belief $q_i(s_i)$, with
\begin{equation}
\label{eq:bethe_approx}
q(s)=\frac{\prod_{a} q_a(\boldsymbol{s}_a)}{\prod_{i} q_i(s_i)^{d_i-1}},
\qquad
\mathbb{H}_{\mathrm{Bethe}}[q]=\sum_{a}\mathbb{H}[q_a]-\sum_{i}(d_i-1)\,\mathbb{H}[q_i],
\end{equation}
so that the joint entropy $\mathbb{H}[q]$ is approximated by $\mathbb{H}_{\mathrm{Bethe}}[q]$ and the VFE reduces to the \emph{Bethe free energy}
\begin{equation}
\label{eq:bethe}
F_{\mathrm{Bethe}}[q] = \sum_{a\in\mathcal{V}} D_{\mathrm{KL}}\!\big[q_a(\boldsymbol{s}_a)\,\|\,f_a(\boldsymbol{s}_a)\big] + \sum_{i\in\mathcal{E}} (d_i-1)\,\mathbb{H}[q_i(s_i)].
\end{equation}
The beliefs must be normalized and locally consistent, so minimizing~\eqref{eq:bethe} is a constrained problem whose Lagrangian stationarity conditions yield message passing: sum-product message passing (belief propagation) under the consistency constraints alone, and Variational Message Passing (VMP) when node beliefs are further mean-field factorized~\cite{senoz2021variational}.

\section{Related Work}
\label{sec:related}

\paragraph{Planning-as-inference and active inference.} Active inference inherits the planning-as-inference framework~\cite{attias2003planning,toussaint2009robot,levine2018reinforcement} and extends it by treating information gain as an intrinsic objective, so that the EFE combines pragmatic and epistemic value~\cite{friston2015active,Da_Costa_2020}. Existing EFE planners employ specialized procedures: policy tree search~\cite{friston2021sophisticated}, branching-time recursion~\cite{champion2022branching}, dynamic programming~\cite{paul2024dpefe}, or message passing on a \emph{generalized} free energy~\cite{parr2019generalised} with a mutual-information term over future graph segments~\cite{vandelaar2024realizing}; none minimizes a single variational objective over the whole graph.

\paragraph{EFE minimization as VFE minimization.} Control-as-inference~\cite{kappen2012optimal,levine2018reinforcement} augments $p$ with preference priors $\hat{p}(x_t)$ and, where used, $\hat{p}(y_t)$, giving a preference-biased model $\hat{p}$ and the VFE $F_{\hat{p}}[q]=\mathbb{E}_q[\log q/\hat{p}]$. De Vries et al.~\cite{devries2025expectedfreeenergybasedplanning} (Proposition~1) showed that further augmenting $\hat{p}$ with \emph{epistemic priors} $\tilde{p}$ makes this VFE equal to $\mathbb{E}_{q(\boldsymbol{u})}[G(\boldsymbol{u})]$ up to a complexity term. Nuijten et al.~\cite{nuijten2026whattypeinference} showed that the augmented VFE can equivalently be written as $F_{\hat{p}}$ plus explicit \emph{entropy corrections}, one conditional-entropy term per prior: $\tilde{p}(u_t)$ contributes $-\mathbb{H}[q(x_t\mid x_{t-1},u_t)]$ and the observation-side priors contribute $+\mathbb{H}[q(y_t\mid x_t,\theta)]$ terms. Proper planning over policies requires one further \emph{planning correction} $+\mathbb{H}[q(u_t\mid x_{t-1})]$, introduced by Lázaro-Gredilla et al.~\cite{lázarogredilla2025typeinferenceplanning} as the correction that turns marginal inference into policy optimization~\cite{nuijten2026whattypeinference}.

\paragraph{Message passing realizations.} The conditional entropies above are ratios of region beliefs and therefore not independent coordinates of the Bethe free energy. Nuijten et al.~\cite{nuijten2026messagepassingrealizationexpected} attached the epistemic priors to a parameter-free model as extra factor nodes and resolved their circular dependence on $q$ by alternating posterior and prior updates. Nuijten et al.~\cite{nuijten2025activeinferencesubtypevariational} introduced region-extended coordinates and a single observation-side channel for $\mathbb{H}[q(y_t\mid x_t,\theta)]$, without the planning correction. Nuijten et al.~\cite{nuijten2026whattypeinference} finally absorbed every correction, including the planning correction, into modified factor functions through channel reparameterization, recovering belief propagation, cross-entropy planning and full EFE-based planning as one family. The key identity of that construction is the variational characterization of conditional entropy (Gibbs' inequality),
\begin{equation}
\label{eq:channel_identity}
\mathbb{H}[q(y\mid x)] = \min_{r}\,\mathbb{E}_{q(y,x)}\!\big[-\log r(y\mid x)\big],
\end{equation}
with equality at the normalized \emph{channel} $r(y\mid x)=q(y\mid x)$. Treating each channel as a free variational parameter turns a single factor into a modified \emph{factor function}, and the corrected Bethe objective is minimized by sum-product updates on those functions, jointly over beliefs and channels.

\paragraph{This paper.} Every realization above assumes a single, enumerable $\theta$, and carries the channels of~\eqref{eq:channel_identity} as free variational parameters. We instead let $\theta$ carry Dirichlet beliefs over the model parameters themselves, and substitute the channels at their closed-form optima, where conjugate factor beliefs supply the parameter--datum coupling that the channels would otherwise have to represent.

\section{Methodology}
\label{sec:methodology}

\subsection{Generative Model and Planning Problem}
\label{sec:generative_model}

We consider a discrete-time POMDP with hidden states $x_t$, actions $u_t$, observations $y_t$ and an episode-level parameter $\theta=(\theta_A,\theta_B,\theta_c)$ in three blocks,
\begin{equation}
    \label{eq:generative_model}
    \begin{gathered}
    p(\boldsymbol{y},\boldsymbol{x},\boldsymbol{u},\theta) = p(\theta_A)\,p(\theta_B)\,p(\theta_c)\,p(x_0)
    \prod_{t=1}^{T} p(u_t)\,f_{\mathrm{dyn},t}\,f_{\mathrm{obs},t},\\
    f_{\mathrm{dyn},t}=p(x_t \mid x_{t-1}, u_t, \theta_B,\theta_c),\qquad f_{\mathrm{obs},t}=p(y_t \mid x_t, \theta_A,\theta_c).
    \end{gathered}
\end{equation}
The context $\theta_c\in\{1,\dots,K\}$ is Categorical and selects which of $K$ observation and transition matrices apply; $\theta_A,\theta_B$ represent those matrices, $p(y_t\mid x_t,\theta_A,\theta_c)=\theta_A(y_t\mid x_t,\theta_c)$ and $p(x_t\mid x_{t-1},u_t,\theta_B,\theta_c)=\theta_B(x_t\mid x_{t-1},u_t,\theta_c)$, with one Dirichlet prior per column, $p(\theta_A)=\prod_{x,\theta_c}\mathrm{Dir}(\alpha_A(\cdot\mid x,\theta_c))$ and $p(\theta_B)=\prod_{x,u,\theta_c}\mathrm{Dir}(\alpha_B(\cdot\mid x,u,\theta_c))$. Unknown $\theta_A,\theta_B$ means unlearned matrices; unknown $\theta_c$ means known matrices but an unknown context, the enumerable model variable of~\cite{nuijten2026whattypeinference}. One draw of $\theta$ is shared by all $T$ factors. Goals enter as preference priors, giving the preference-biased model $\hat{p}=p\prod_t\hat{p}_x(x_t)\,\hat{p}_y(y_t)$ (Figure~\ref{fig:ffg}) with VFE $F_{\hat{p}}[q]=\mathbb{E}_q[\log q/\hat{p}]$ ($\hat{p}$ unnormalized, so a divergence up to a constant).

\begin{figure}[t]
\centering
\includegraphics[trim=42 42 42 42, clip, width=0.98\textwidth]{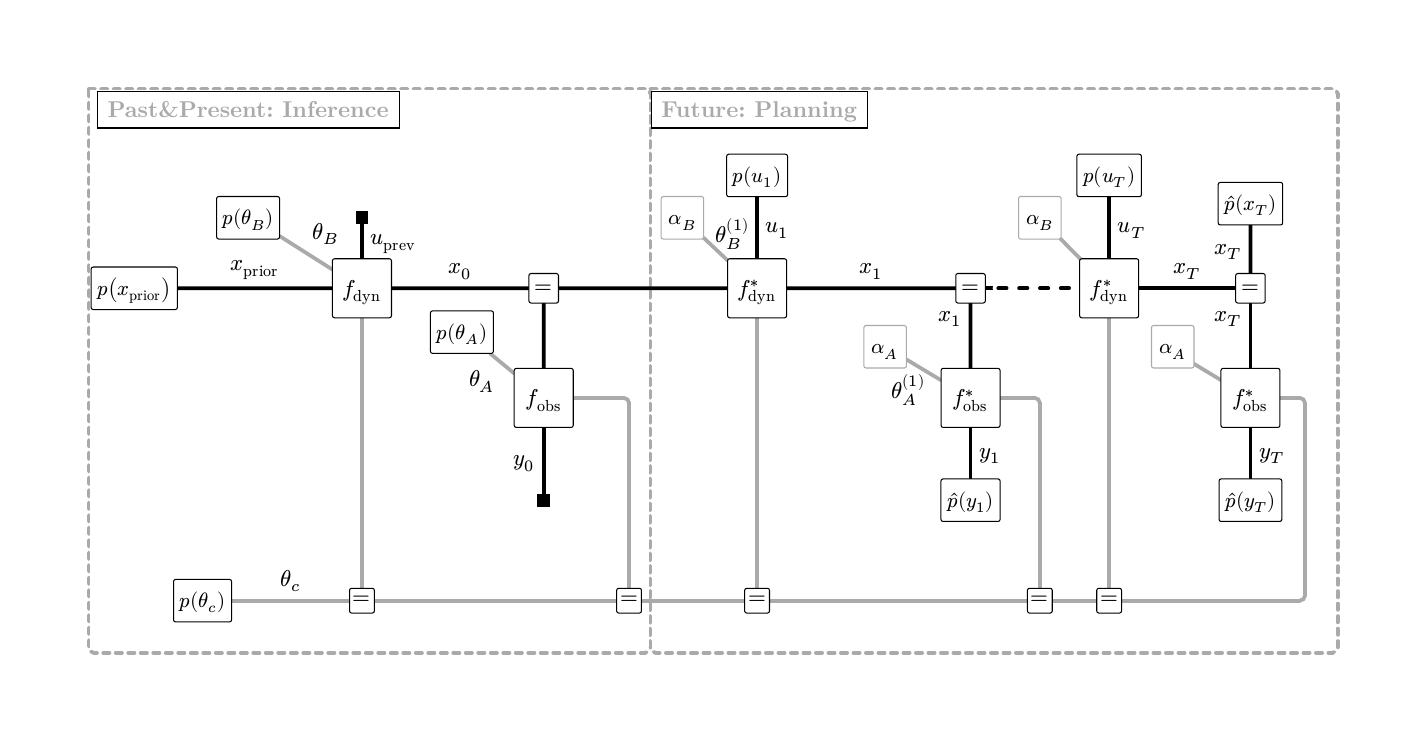}
\caption{\textbf{Frozen-parameter factor graph} (Section~\ref{sec:family}). Squares: factors; edges: variables; $=$: equality nodes; filled squares: clamped data; $\hat{p}$: preferences. \emph{Left:} the real slice carries the shared $\theta_A,\theta_B$ and turns $p(x_{\mathrm{prior}}),u_{\mathrm{prev}},y_0$ into the planning prior $p(x_0)$. \emph{Right:} over the horizon each factor $f^{*}$ (factor functions~\eqref{eq:kernel_obs}--\eqref{eq:kernel_dyn}) owns a copy $\theta_A^{(t)},\theta_B^{(t)}$ with its own degree-two prior node (open, gray) frozen at $\mathrm{Dir}(\alpha_A)$ resp.\ $\mathrm{Dir}(\alpha_B)$; only $\theta_c$ is shared, through equality nodes.}
\label{fig:ffg}
\end{figure}

The agent scores an action by the per-step EFE,
\begin{equation}
    \label{eq:efe}
    \begin{aligned}
    G(u_t) = \mathbb{E}_{q(\cdot \mid u_t)}\!\Big[&\log \frac{q(x_t \mid x_{t-1}, u_t)}{\hat{p}_x(x_t)\,\hat{p}_y(y_t)\,p(y_t \mid x_t, \theta_A,\theta_c)}\\
    &+\log\frac{q(\theta_A,\theta_c \mid x_t)}{q(\theta_A,\theta_c \mid y_t, x_t)}
    +\log\frac{q(\theta_B \mid x_{t-1},\theta_c)}{q(\theta_B \mid x_t, x_{t-1}, u_t,\theta_c)}\Big],
    \end{aligned}
\end{equation}
where $q(\cdot\mid u_t)$ abbreviates $q(y_t,x_t,x_{t-1},\theta\mid u_t)$. The first term is risk plus ambiguity $\mathbb{H}_q[y_t\mid x_t,\theta_A,\theta_c]$. The remaining terms contrast each block's \emph{cavity}, the belief before the imagined datum, with its \emph{posterior}, and are information gains. By the chain rule they split into the \emph{salience} $\mathcal{I}_{c,t}=I(\theta_c;y_t\mid x_t)$, which rewards resolving the context, and the \emph{novelties} $\mathcal{I}_{A,t}=I(\theta_A;y_t\mid x_t,\theta_c)$ and $\mathcal{I}_{B,t}=I(\theta_B;x_t,u_t\mid x_{t-1},\theta_c)$, which reward learning the matrices. The planning problem is to find the policy $q(u_t\mid x_{t-1})$ that minimizes $\sum_t\mathbb{E}_{q(u_t)}[G(u_t)]$.

Two obstacles separate~\eqref{eq:efe} from a message-passing algorithm. First, $G$ contains the agent's own conditionals inside the logarithm, so it is not a VFE against any fixed model. Second, the novelties are mutual informations of \emph{local joints} such as $q(y_t,x_t,\theta_c,\theta_A)$; a family that factorizes the Dirichlet block away from the imagined datum, $q(y_t,x_t,\theta_c)\,q(\theta_A)$, has $\mathcal{I}_{A,t}\equiv0$, yet this is exactly the factorization that structured mean-field uses to obtain Dirichlet conjugacy. Section~\ref{sec:objective} fixes the first, Section~\ref{sec:family} the second.

\subsection{From Expected to Variational Free Energy}
\label{sec:objective}

Following~\cite{devries2025expectedfreeenergybasedplanning,nuijten2026whattypeinference}, we keep the objective and change the model: $\hat{p}$ is augmented with \emph{epistemic priors}, one for every factor of~\eqref{eq:efe} that is not already part of $\hat{p}$,
\begin{align}
    \tilde{p}(u_t) &\propto \exp\!\big(-\mathbb{E}_{q(x_t,x_{t-1}\mid u_t)}\log q(x_t\mid x_{t-1},u_t)\big), \label{eq:prior_u}\\
    \tilde{p}(x_t) &\propto \exp\!\big(-\mathbb{E}_{q(\theta_A,\theta_c\mid x_t)}\,h[q(y_t\mid x_t,\theta_A,\theta_c)]\big), \label{eq:prior_x}\\
    \tilde{p}(y_t,x_t) &\propto \exp D_{\mathrm{KL}}[q(\theta_A,\theta_c\mid y_t,x_t)\,\|\,q(\theta_A,\theta_c\mid x_t)], \label{eq:prior_yx}\\
    \tilde{p}(x_t,x_{t-1},u_t,\theta_c) &\propto \exp D_{\mathrm{KL}}[q(\theta_B\mid x_t,x_{t-1},u_t,\theta_c)\,\|\,q(\theta_B\mid x_{t-1},\theta_c)], \label{eq:prior_xxu}
\end{align}
where $h[q(\cdot\mid x)]$ denotes the entropy of a conditional as a function of $x$ and $\mathbb{H}_q[\,\cdot\mid x]=\mathbb{E}_{q(x)}h[q(\cdot\mid x)]$ the conditional entropy. Each prior is the exponential of a local functional of $q$, so $-\mathbb{E}_q[\log\tilde{p}]$ is closed-form.

\begin{corollary}[Epistemic priors for three parameter blocks; cf.~\cite{devries2025expectedfreeenergybasedplanning}, Prop.~1]
\label{cor:dual_priors}
Let $\tilde{p}$ be $\hat{p}$ augmented with the priors~\eqref{eq:prior_u}--\eqref{eq:prior_xxu} for $t=1,\dots,T$. Then $F_{\tilde{p}}[q] = \sum_{t}\mathbb{E}_{q(u_t)}[G(u_t)] + B[q]$, where $B[q]=D_{\mathrm{KL}}[q\,\|\,p]\ge0$ is the complexity of $q$ relative to~\eqref{eq:generative_model}.
\end{corollary}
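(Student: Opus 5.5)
We expand $F_{\tilde{p}}[q]=\mathbb{E}_q[\log q-\log\tilde{p}]$ factor by factor and match terms against~\eqref{eq:efe}, following the proof of Prop.~1 in~\cite{devries2025expectedfreeenergybasedplanning}. First write $\log\tilde{p}=\log p+\sum_t[\log\hat{p}_x(x_t)+\log\hat{p}_y(y_t)+\log\tilde{p}(u_t)+\log\tilde{p}(x_t)+\log\tilde{p}(y_t,x_t)+\log\tilde{p}(x_t,x_{t-1},u_t,\theta_c)]$, treating the normalizers of the epistemic priors as constants. Then split $\mathbb{E}_q[\log q]$ by the chain rule along the rollout:
\[
\log q=\log q(\theta)+\log q(x_0)+\sum_t\big[\log q(u_t\mid\cdot)+\log q(x_t\mid x_{t-1},u_t,\cdot)+\log q(y_t\mid x_t,\theta_A,\theta_c)\big],
\]
and group the pieces that do not reappear in $G$ with $\log p$ to form $B[q]=D_{\mathrm{KL}}[q\|p]$. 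Concretely, I would add and subtract $\mathbb{E}_q\log p$, so that $F_{\tilde{p}}=B[q]-\mathbb{E}_q[\log\tilde{p}-\log p]$. It then suffices to show that $-\mathbb{E}_q\big[\log(\tilde{p}/p)\big]=\sum_t\mathbb{E}_{q(u_t)}G(u_t)$, possibly after moving terms that belong to the complexity into $B$.

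Next, verify each prior's contribution term by term. By~\eqref{eq:prior_u}, $-\mathbb{E}_{q(u_t)}\log\tilde{p}(u_t)=\mathbb{E}_{q}\log q(x_t\mid x_{t-1},u_t)$, which is the state-entropy part of the first log in~\eqref{eq:efe}. Together with $-\mathbb{E}\log\hat{p}_x\hat{p}_y$, it gives risk. By~\eqref{eq:prior_x}, $-\mathbb{E}_{q(x_t)}\log\tilde{p}(x_t)$ equals $\mathbb{E}_q[-\log q(y_t\mid x_t,\theta_A,\theta_c)]$, which reproduces the ambiguity term $-\mathbb{E}\log p(y_t\mid x_t,\theta_A,\theta_c)$ once $q(y_t\mid x_t,\theta_A,\theta_c)=p(y_t\mid\cdot)$ is used. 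I expect this identification to be the delicate point; if it fails, I would place the discrepancy in $B$. For~\eqref{eq:prior_yx}, we have $-\mathbb{E}_{q(y_t,x_t)}\log\tilde{p}=-\mathbb{E}_{q(y_t,x_t)}D_{\mathrm{KL}}[q(\theta_A,\theta_c\mid y_t,x_t)\|q(\theta_A,\theta_c\mid x_t)]$. This is exactly $\mathbb{E}_q\log\frac{q(\theta_A,\theta_c\mid x_t)}{q(\theta_A,\theta_c\mid y_t,x_t)}$ in~\eqref{eq:efe}. The transition prior~\eqref{eq:prior_xxu} works the same way, yielding the $\theta_B$ information gain. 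Summing over $t$ and rewriting each expectation as $\mathbb{E}_{q(u_t)}\mathbb{E}_{q(\cdot\mid u_t)}$ gives $\sum_t\mathbb{E}_{q(u_t)}G(u_t)$.

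The main obstacle is the bookkeeping: every factor of $q$ must be used exactly once, either inside some $G(u_t)$ or inside $B[q]=\mathbb{E}_q\log(q/p)$. In particular, the log-ratio $\log q(x_t\mid x_{t-1},u_t)$ produced by $\tilde{p}(u_t)$ has to be counted against $p$'s transition factor in $B$ rather than against $q$'s own entropy. I would handle this by writing $B$ explicitly as $\mathbb{E}_q\log q-\mathbb{E}_q\log p$ and checking that the leftover terms in $F_{\tilde{p}}-\sum_t\mathbb{E}G$ reassemble into precisely that difference, up to the normalizing constants of the unnormalized priors. Nonnegativity of $B$ then follows from Gibbs' inequality, since $p$ in~\eqref{eq:generative_model} is normalized.
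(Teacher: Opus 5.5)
Your route is the paper's route. You write $F_{\tilde p}=B[q]-\mathbb{E}_q[\log(\tilde p/p)]$ and evaluate each epistemic prior term by term, which is the paper's Lemma~\ref{lem:app_priors}. Your steps for $\tilde p(u_t)$, $\tilde p(y_t,x_t)$ and $\tilde p(x_t,x_{t-1},u_t,\theta_c)$ are correct as stated. Once $\mathbb{E}_q\log p$ is added and subtracted, $B[q]$ is never touched, so the chain-rule split of $\log q$ is unnecessary. Your bookkeeping worry is also moot: the term $\log q(x_t\mid x_{t-1},u_t)$ from $\tilde p(u_t)$ is not reassigned to $B$; it is the numerator of the first logarithm in~\eqref{eq:efe}.

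The gap is the ambiguity step, and your fallback would fail. The prior~\eqref{eq:prior_x} produces $\mathbb{H}[q(y_t\mid x_t,\theta_A,\theta_c)]$, whereas $G$ contains $\mathbb{E}_q[-\log p(y_t\mid x_t,\theta_A,\theta_c)]$. So in general
\[
F_{\tilde p}[q]=\sum_t\mathbb{E}_{q(u_t)}[G(u_t)]+D_{\mathrm{KL}}[q\,\|\,p]-\sum_t\mathbb{E}_{q(x_t,\theta_A,\theta_c)}D_{\mathrm{KL}}\big[q(y_t\mid x_t,\theta_A,\theta_c)\,\big\|\,p(y_t\mid x_t,\theta_A,\theta_c)\big].
\]
The last sum is a separate, nonpositive term. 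You cannot ``place it in $B$'' and still have $B[q]=D_{\mathrm{KL}}[q\,\|\,p]$, as the statement requires. Absorbing it amounts to replacing $p$'s observation factor by $q$'s own channel in the reference model of the complexity, which is a different statement.

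So the identification $q(y_t\mid x_t,\theta_A,\theta_c)=p(y_t\mid x_t,\theta_A,\theta_c)$ must be justified, not just used. The paper justifies it from the factor beliefs of Section~\ref{sec:family}. The Bethe belief of $f_{\mathrm{obs},t}$ is the factor times its incoming messages, with $\mu_{y_t}=\hat p_y$. Hence $q_{\mathrm{obs},t}(y_t\mid x_t,\theta_A,\theta_c)\propto p(y_t\mid x_t,\theta_A,\theta_c)\,\hat p_y(y_t)$, and the identity is exact for uniform $\hat p_y$, which is the experimental setting. Otherwise the paper keeps $B$ fixed and moves the discrepancy into $G$, reading its ambiguity term as $\mathbb{H}[q(y_t\mid x_t,\theta_A,\theta_c)]$.
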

\begin{proof}
See Appendix~\ref{app:priors_efe}.
\end{proof}

The prior~\eqref{eq:prior_xxu} is the transition-novelty prior, new relative to~\cite{nuijten2026whattypeinference}. Proper planning over policies further requires the planning correction $+\mathbb{H}_q[u_t\mid x_{t-1}]$~\cite{lázarogredilla2025typeinferenceplanning,nuijten2026whattypeinference}, which cancels the reward that the VFE pays for action entropy. Evaluating $-\mathbb{E}_q[\log\tilde{p}]$ for each prior (Appendix~\ref{app:priors_entropy}) writes the EFE-based planning objective as the base VFE plus a per-step \emph{entropy correction}, $F_T[q] = F_{\hat{p}}[q] + \sum_{t=1}^{T} \Delta F_t[q]$ (the subscript is the horizon) with
\begin{equation}
    \label{eq:deltaF}
    \Delta F_t[q] = \mathbb{H}_q[y_t \mid x_t] - 2\big(\mathcal{I}_{A,t}+\mathcal{I}_{c,t}\big) - \mathbb{H}_q[x_t \mid x_{t-1}, u_t] - \mathcal{I}_{B,t} + \mathbb{H}_q[u_t \mid x_{t-1}].
\end{equation}
The factor two is no double count: the ambiguity prior~\eqref{eq:prior_x} pays one unit of each observation-side gain through its $\theta$-coupling and the novelty prior~\eqref{eq:prior_yx} a second, while the transition side has no ambiguity term (Appendix~\ref{app:priors_entropy}). Message passing on $\tilde{p}$ now applies, \emph{provided} that $q$ retains the local joints that make the information gains in~\eqref{eq:deltaF} nonzero.

\subsection{Variational Family: Sum-Product Beliefs on the Frozen-Parameter Graph}
\label{sec:family}

\paragraph{Frozen-parameter graph.} The agent plans with its current parameter posterior, so imagined observations in the rollout must not update it. On the graph of~\eqref{eq:generative_model} this fails: the sum-product message from $f_{\mathrm{obs},t}$ to a shared $\theta_A$ node, $\sum_{y,x,\theta_c}p(y\mid x,\theta_A,\theta_c)\,\mu_y\mu_x\mu_{\theta_c}$ with $\mu$ the messages into the factor, is a mixture of monomials rather than a Dirichlet, so conjugacy is lost and imagined observations mix across the horizon. Both problems are resolved by a step-wise copy $\theta_A^{(t)}$ per observation factor with its own prior node $p(\theta_A^{(t)}):=q_{\theta_A}=\mathrm{Dir}(\theta_A^{(t)}\mid\alpha_A)$ carrying the current real-data belief, and likewise $\theta_B^{(t)}$ per $f_{\mathrm{dyn},t}$ (Figure~\ref{fig:ffg}; $q_{\theta_A},q_{\theta_B}$ denote these beliefs, as opposed to the copy marginals). Setting a prior node to a belief makes $\tilde{p}$ a \emph{planning-time} model, rebuilt each environment step, so the objective is single per planning call. Each copy has degree two, so the message into its factor is exactly $\mathrm{Dir}(\alpha_A)$ and the return message reaches no other factor; the price is that novelty does not decay along the horizon, which replanning bounds (Appendix~\ref{app:mp_generic}). The finite $\theta_c$ is \emph{not} copied: it is conjugate to any factor, and sharing it lets an early cue inform a distant step.

\paragraph{Factor beliefs are conjugate in closed form.} A Bethe factor belief is the factor times its incoming messages, $q_{\mathrm{obs},t}\propto p(y_t\mid x_t,\theta_A^{(t)},\theta_c)\,\mu_{y_t}\mu_{x_t}\mu_{\theta_c}\,\mathrm{Dir}(\theta_A^{(t)}\mid\alpha_A)$. For each context the likelihood is a monomial in one column of $\theta_A$. Thus conjugacy applies entrywise, $\theta_A(y\mid x,\theta_c)\,\mathrm{Dir}(\theta_A\mid\alpha_A)=m_A(y\mid x,\theta_c)\,\mathrm{Dir}(\theta_A\mid\alpha_A+e_{yx\theta_c})$, with $e_{yx\theta_c}$ one count on entry $y$ of column $(x,\theta_c)$ and $m_A=\alpha_A/\alpha_{A,0}$ the posterior predictive likelihood (Appendix~\ref{app:family}). Hence, exactly,
\begin{equation}
    \label{eq:factor_belief}
    q_{\mathrm{obs},t}(y_t,x_t,\theta_c,\theta_A^{(t)})=q_{yx,t}(y_t,x_t,\theta_c)\;\underbrace{\mathrm{Dir}(\theta_A^{(t)}\mid\alpha_A+e_{y_tx_t\theta_c})}_{q_{\mathrm{obs},t}(\theta_A\mid x_t,y_t,\theta_c)},
\end{equation}
with discrete marginal $q_{yx,t}:=\!\int\! q_{\mathrm{obs},t}\,d\theta_A\propto m_A\,\mu_{y_t}\mu_{x_t}\mu_{\theta_c}$, and $q_{\mathrm{dyn},t}=q_{\mathrm{trip},t}(x_t,x_{t-1},u_t,\theta_c)\,\mathrm{Dir}(\theta_B\mid\alpha_B+e_{x_tx_{t-1}u_t\theta_c})$ with the predictive likelihood $m_B$ of $\alpha_B$.

\paragraph{Coordinates and constraints.} The free coordinates are the Bethe beliefs of the discrete graph over $\{y_{1:T},x_{0:T},u_{1:T},\theta_c\}$: the block beliefs $q_{yx,t}$, $q_{\mathrm{trip},t}$ and the singletons $q(x_t)$, $q(y_t)$, $q(u_t)$, $q(\theta_c)$, subject to normalization and factor-to-singleton consistency, with $q_{\theta_A}=\mathrm{Dir}(\alpha_A)$ and $q_{\theta_B}=\mathrm{Dir}(\alpha_B)$, held fixed during planning. The $\theta_A$- and $\theta_B$-conditionals of the factor beliefs are not coordinates: any other choice raises the free energy of the region, so they are substituted out at the closed-form optima of~\eqref{eq:factor_belief} (Appendix~\ref{app:family}, Lemma~\ref{lem:app_elimination}). There are no channel constraints either: the channels of~\eqref{eq:channel_identity} are not coordinates but definitions $r=q(\cdot\mid\cdot)$.

\paragraph{Cavities and per-entry information gains.} Novelty contrasts the posterior $q_{\mathrm{obs},t}(\theta_A\mid x_t,y_t,\theta_c)$ with the \emph{cavity} $q_{\mathrm{obs},t}(\theta_A\mid x_t,\theta_c)$, the same belief before the outcome is known,
\begin{equation}
    \label{eq:cavity}
    q_{\mathrm{obs},t}(\theta_A\mid x_t,\theta_c)=\sum_{y}q_{yx,t}(y\mid x_t,\theta_c)\,\mathrm{Dir}(\alpha_A+e_{yx_t\theta_c}),
\end{equation}
a finite Dirichlet mixture, the only non-conjugate object in the construction. We project it onto $\mathrm{Dir}(\alpha_A+\bar{n}_{x_t\theta_c})$ with the expected fractional counts $\bar{n}_{x_t\theta_c}(y):=q_{yx,t}(y\mid x_t,\theta_c)$. It matches the mixture mean, is exact for one-hot $q_{yx,t}(\cdot\mid x_t,\theta_c)$, and is overconfident by $O(1/\alpha_{A,0})$ (Appendix~\ref{app:cavity}). The per-entry gains are then closed-form Dirichlet--Dirichlet divergences,
\begin{equation}
    \label{eq:tilts}
    \Delta_{A,t}(x_t,y_t,\theta_c)=D_{\mathrm{KL}}\big[q_{\mathrm{obs},t}(\theta_A\mid x_t,y_t,\theta_c)\,\big\|\,q_{\mathrm{obs},t}(\theta_A\mid x_t,\theta_c)\big]
\end{equation}
with $\mathcal{I}_{A,t}=\mathbb{E}_{q_{yx,t}}[\Delta_{A,t}]$, and symmetrically $\Delta_{B,t}(x_t,x_{t-1},u_t,\theta_c)$ with cavity $q_{\mathrm{dyn},t}(\theta_B\mid x_{t-1},\theta_c)$ and $\mathcal{I}_{B,t}=\mathbb{E}_{q_{\mathrm{trip},t}}[\Delta_{B,t}]$. Salience needs neither: $\theta_c$ is a stored axis, so $\mathcal{I}_{c,t}=I_{q_{yx,t}}(y_t;\theta_c\mid x_t)$ is a mutual information of the finite array.

\subsection{The Planning Objective in Closed Form}
\label{sec:planning_objective}

Combining the factor beliefs along their separators gives the Bethe joint of the frozen graph, $q=\prod_{t}q_{\mathrm{obs},t}\,q_{\mathrm{dyn},t}\big/\big(q(\theta_c)^{2T-1}\prod_{t<T}q(x_t)^{2}\,q(x_T)\big)$, with exponents from the variable degrees (Appendix~\ref{app:objective_proof}). Its $\theta$-free conditionals are the block conditionals and its $\theta_A,\theta_B$-conditionals the posteriors and cavities of~\eqref{eq:factor_belief}--\eqref{eq:cavity}, so every object in~\eqref{eq:prior_u}--\eqref{eq:prior_xxu} is well-defined under $q$.

\begin{proposition}[Closed-form planning objective]
\label{prop:objective}
Under the Bethe joint $q$: (i)~in the base free energy the parameter entropy of each factor belief cancels per entry against its parameter energy, leaving the Bethe free energy of the discrete graph with predictive energies $-\log m_A$, $-\log m_B$ (eq.~\eqref{eq:base_F}); (ii)~the corrections~\eqref{eq:deltaF} hold unchanged, with $\mathcal{I}_{A,t},\mathcal{I}_{B,t}$ as in~\eqref{eq:tilts} and $\mathcal{I}_{c,t}=I_{q_{yx,t}}(y_t;\theta_c\mid x_t)$; (iii)~hence the planning objective is the closed-form functional
\begin{equation}
    \label{eq:F_T}
    \begin{aligned}
    F_T[q]={}&\textstyle\sum_{t=1}^{T}\Big(D_{\mathrm{KL}}\big[q_{yx,t}\,\big\|\,m_A\,\hat{p}_y\big]+D_{\mathrm{KL}}\big[q_{\mathrm{trip},t}\,\big\|\,m_B\,p(u_t)\,\hat{p}_x\big]\Big)+\mathcal{C}[q]\\
    &+\sum_{t=1}^{T}\Big(\mathbb{H}_{q_{yx,t}}[y_t\mid x_t]-2\,\mathbb{E}_{q_{yx,t}}[\Delta_{A,t}]-2\,I_{q_{yx,t}}(y_t;\theta_c\mid x_t)\\
    &\qquad-\mathbb{H}_{q_{\mathrm{trip},t}}[x_t\mid x_{t-1},u_t]-\mathbb{E}_{q_{\mathrm{trip},t}}[\Delta_{B,t}]+\mathbb{H}_{q_{\mathrm{trip},t}}[u_t\mid x_{t-1}]\Big)
    \end{aligned}
\end{equation}
over the discrete coordinates alone, where $\mathcal{C}[q]$ collects the Bethe counting terms and the $x_0,\theta_c$ priors.
\end{proposition}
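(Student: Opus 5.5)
The proof splits along the decomposition $F_T[q]=F_{\hat p}[q]+\sum_t\Delta F_t[q]$ established after Corollary~\ref{cor:dual_priors}, and we treat the base term and the corrections separately.

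\emph{Part (i).} Substituting the Bethe joint of the frozen graph, $F_{\hat p}$ becomes the Bethe free energy~\eqref{eq:bethe} with one region per factor node, including the degree-two copy priors. Take an observation region and insert the factorization~\eqref{eq:factor_belief}. By the chain rule the region divergence $D_{\mathrm{KL}}[q_{\mathrm{obs},t}\,\|\,p(y_t\mid x_t,\theta_A^{(t)},\theta_c)\,\mathrm{Dir}(\theta_A^{(t)}\mid\alpha_A)\,\hat p_y]$ is an expectation under $q_{yx,t}$ of a per-entry $\theta_A$-integral. For each fixed $(y,x,\theta_c)$ we apply the entrywise conjugacy identity $\theta_A(y\mid x,\theta_c)\,\mathrm{Dir}(\alpha_A)=m_A\,\mathrm{Dir}(\alpha_A+e_{yx\theta_c})$. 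The belief on $\theta_A$ is exactly $\mathrm{Dir}(\alpha_A+e_{yx\theta_c})$, so the parameter entropy and the parameter energy cancel, up to the scalar $-\log m_A(y\mid x,\theta_c)$. This is the per-entry cancellation claimed in (i).

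The copy edges have degree two. Their singleton entropy terms cancel against the prior-node regions: the prior node is frozen at the copy marginal's message, and this bookkeeping should be checked explicitly. What remains is $D_{\mathrm{KL}}[q_{yx,t}\,\|\,m_A\hat p_y]$. The dynamics region is handled in the same way and yields $D_{\mathrm{KL}}[q_{\mathrm{trip},t}\,\|\,m_B\,p(u_t)\,\hat p_x]$. Placing $\hat p_x$ in the dynamics region is a convention. The remaining singleton counting terms for $x_t$, $\theta_c$, $y_t$, $u_t$, together with the $x_0$ and $\theta_c$ priors, are collected into $\mathcal C[q]$. Lemma~\ref{lem:app_elimination} justifies evaluating at the conjugate conditionals rather than optimizing over them.

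\emph{Part (ii).} The derivation of~\eqref{eq:deltaF} from the priors~\eqref{eq:prior_u}--\eqref{eq:prior_xxu} (Appendix~\ref{app:priors_entropy}) uses only the chain rule on conditionals of $q$. We must therefore check that every conditional appearing there is a conditional of a single region belief under the Bethe joint. This holds because the Bethe joint is tree-consistent on each region. For instance, $q(\theta_A,\theta_c\mid y_t,x_t)$ and $q(\theta_A,\theta_c\mid x_t)$ are computed from $q_{\mathrm{obs},t}$ alone, since the copy $\theta_A^{(t)}$ lives in no other region. Then:
\begin{itemize}
\item $\mathbb H_q[y_t\mid x_t]=\mathbb H_{q_{yx,t}}[y_t\mid x_t]$;
\item the $\theta$-contrast splits by the chain rule into $I(\theta_c;y_t\mid x_t)$, computed on the finite array $q_{yx,t}$, plus $\mathbb E_{q_{yx,t}}$ of the per-entry Dirichlet divergence $\Delta_{A,t}$ with the cavity~\eqref{eq:cavity};
\item the transition terms follow analogously from $q_{\mathrm{trip},t}$.
\end{itemize}

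\emph{Main obstacle.} The hardest step is the cavity. The exact cavity is the Dirichlet mixture~\eqref{eq:cavity}, so $\mathbb E[\Delta_{A,t}]$ equals the exact mutual information only for that mixture. The statement asserts it with the projected cavity, so (ii) must be read with~\eqref{eq:tilts} taken as the definition, citing Appendix~\ref{app:cavity} for the $O(1/\alpha_{A,0})$ error. I would state this explicitly and argue that, as defined, the novelty terms are functionals of $q_{yx,t}$ and $q_{\mathrm{trip},t}$ only.

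\emph{Part (iii).} Summing (i) and (ii) then gives~\eqref{eq:F_T}. No $\theta_A$ or $\theta_B$ coordinates remain, because their conditionals are fixed closed-form functions of the discrete beliefs.
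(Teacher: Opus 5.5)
Your proposal follows the paper's proof: the degree-two copy prior and its counting term combine to $-\mathbb{E}_{q(\theta_A^{(t)})}[\log\mathrm{Dir}(\theta_A\mid\alpha_A)]$ (the bookkeeping you flag), entrywise conjugacy via Lemma~\ref{lem:app_elimination} collapses each region to its predictive energy, and since each copy lies in exactly one region the corrections are read off $q_{yx,t}$ and $q_{\mathrm{trip},t}$ with the posteriors and cavities of~\eqref{eq:factor_belief}--\eqref{eq:cavity}. On your cavity caveat: the paper's step (ii) uses the exact mixture cavity~\eqref{eq:cavity}, for which $\mathcal{I}_{A,t}=\mathbb{E}_{q_{yx,t}}[\Delta_{A,t}]$ holds exactly, so the $O(1/\alpha_{A,0})$ error of Appendix~\ref{app:cavity} attaches only to the closed-form Dirichlet--Dirichlet evaluation of the weights, not to the identity itself.
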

\begin{proof}
See Appendix~\ref{app:objective_proof}.
\end{proof}

\subsection{Stationarity Conditions and Message Passing}
\label{sec:channels}

Since $F_T$ has Bethe form over the discrete coordinates, its stationarity conditions under the normalization and consistency multipliers (Appendix~\ref{app:mp_lagrangian}) yield sum-product message updates. Two facts make the variation local and term-by-term (Appendix~\ref{app:mp_stationarity}): entropies of conditionals of a normalized joint differentiate as if the logarithm were frozen (Lemma~\ref{lem:score}), and the dependence of the cavity on $q_{yx,t}$ contributes nothing, $\delta\mathcal{I}_{A,t}/\delta q_{yx,t}(y,x,\theta_c)=\Delta_{A,t}(x,y,\theta_c)$ (Lemma~\ref{lem:cavity}).

\begin{proposition}[Modified factor functions]
\label{prop:kernels}
At stationarity $q^*_{yx,t}\propto\bar{g}_{\mathrm{obs},t}$ and $q^*_{\mathrm{trip},t}\propto\bar{g}_{\mathrm{dyn},t}$, up to the singleton multipliers fixed in Appendix~\ref{app:mp_stationarity}, with the factor functions
\begin{align}
    \bar{g}_{\mathrm{obs},t}(y_t,x_t,\theta_c) &= m_A(y_t\mid x_t,\theta_c)\; \frac{r_t(y_t\mid x_t,\theta_c)^2}{r_t(y_t\mid x_t)}\; e^{2\,\Delta_{A,t}(x_t,y_t,\theta_c)}, \label{eq:kernel_obs}\\
    \bar{g}_{\mathrm{dyn},t}(x_t,x_{t-1},u_t,\theta_c) &= m_B(x_t\mid x_{t-1},u_t,\theta_c)\; \frac{r_t(u_t\mid x_{t-1})}{r_t(x_t\mid x_{t-1},u_t)}\; e^{\Delta_{B,t}}, \label{eq:kernel_dyn}
\end{align}
evaluated at the self-consistent conditionals $r_t(\cdot\mid\cdot)=q^*(\cdot\mid\cdot)$ of the block beliefs.
\end{proposition}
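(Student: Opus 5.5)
We form the Lagrangian of the closed-form objective $F_T$ of Proposition~\ref{prop:objective}(iii), with normalization multipliers for each block belief and consistency multipliers coupling $q_{yx,t}$ to $q(y_t)$, $q(x_t)$, $q(\theta_c)$ and $q_{\mathrm{trip},t}$ to $q(x_t)$, $q(x_{t-1})$, $q(u_t)$, $q(\theta_c)$. We then differentiate with respect to each block entry $q_{yx,t}(y,x,\theta_c)$ and $q_{\mathrm{trip},t}(x_t,x_{t-1},u_t,\theta_c)$. Only the per-step summand of $F_T$ involving that block contributes, plus consistency multipliers. The multipliers are later identified with log-messages, exactly as in the standard derivation of sum-product from the Bethe free energy~\cite{senoz2021variational}.

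\textbf{Observation block.} The base term $D_{\mathrm{KL}}[q_{yx,t}\|m_A\hat p_y]$ gives $\log q_{yx,t}+1-\log m_A-\log\hat p_y$. The correction $\mathbb{H}_{q_{yx,t}}[y_t\mid x_t]$ has derivative $-\log r_t(y\mid x)$ plus a constant, by Lemma~\ref{lem:score}: the normalization of the conditional kills the term from differentiating inside the logarithm. We write the salience as $I(y;\theta_c\mid x)=\mathbb{H}[y\mid x]-\mathbb{H}[y\mid x,\theta_c]$, and Lemma~\ref{lem:score} again gives derivative $-\log r_t(y\mid x)+\log r_t(y\mid x,\theta_c)$. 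With the factor $-2$, this yields $+2\log r_t(y\mid x)-2\log r_t(y\mid x,\theta_c)$. By Lemma~\ref{lem:cavity}, the novelty $-2\,\mathbb{E}[\Delta_{A,t}]$ contributes $-2\Delta_{A,t}(x,y,\theta_c)$, because the cavity dependence drops out.

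Summing, the $\log r_t(y\mid x)$ terms combine to $-1+2=+1$. Setting the derivative to zero gives
\[
\log q_{yx,t}=\log m_A+\log\hat p_y+2\log r_t(y\mid x,\theta_c)-\log r_t(y\mid x)+2\Delta_{A,t}+\text{multipliers}.
\]
Exponentiating gives $q^*_{yx,t}\propto\bar g_{\mathrm{obs},t}$ of~\eqref{eq:kernel_obs}, times the singleton multiplier factors. The preference $\hat p_y$ is absorbed into the $y$-singleton factor, as it is a function of $y_t$ alone.

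\textbf{Transition block.} The same procedure applies. The base term gives $\log m_B+\log p(u_t)+\log\hat p_x$, where the last two are singleton factors. The term $-\mathbb{H}[x_t\mid x_{t-1},u_t]$ gives $+\log r_t(x_t\mid x_{t-1},u_t)$ in the gradient, hence $-\log r_t(x_t\mid x_{t-1},u_t)$ in $\log q$. The planning correction $+\mathbb{H}[u_t\mid x_{t-1}]$ gives $+\log r_t(u_t\mid x_{t-1})$ in $\log q$. The novelty gives $+\Delta_{B,t}$ via the $\theta_B$-analogue of Lemma~\ref{lem:cavity}. This reproduces~\eqref{eq:kernel_dyn}.

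All conditionals $r_t$ are evaluated at the current $q$. At a stationary point they therefore equal $q^*(\cdot\mid\cdot)$, which gives the self-consistency clause.

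\textbf{Main obstacle.} The main obstacle is verifying that Lemmas~\ref{lem:score} and~\ref{lem:cavity} apply verbatim to conditionals involving the shared $\theta_c$ axis and to the projected Dirichlet cavity $\mathrm{Dir}(\alpha_A+\bar n)$. For Lemma~\ref{lem:cavity}, I would check directly that $\partial/\partial\bar n$ of $\mathbb{E}_{y\mid x,\theta_c}\,D_{\mathrm{KL}}[\mathrm{Dir}(\alpha+e_y)\|\mathrm{Dir}(\alpha+\bar n)]$ vanishes at $\bar n=q(\cdot\mid x,\theta_c)$. This holds because the cavity is a moment-matched exponential-family projection: the expected sufficient statistic $\mathbb{E}\log\theta$ of the mixture matches that of the cavity to the order stated in Appendix~\ref{app:cavity}. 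If the match is only approximate, the stationarity holds with that approximation, which is consistent with the statement's reliance on Appendix~\ref{app:mp_stationarity}.
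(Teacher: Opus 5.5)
Your proposal is the paper's own derivation: the Lagrangian of the closed-form $F_T$, Lemma~\ref{lem:score} for $\mathbb{H}[y_t\mid x_t]$, $\mathbb{H}[x_t\mid x_{t-1},u_t]$, $\mathbb{H}[u_t\mid x_{t-1}]$ and the salience, and Lemma~\ref{lem:cavity} for the novelties. Your split $\mathbb{H}[y\mid x]-\mathbb{H}[y\mid x,\theta_c]$ is equivalent to the paper's use of $q(\theta_c\mid x,y)/q(\theta_c\mid x)=q(y\mid x,\theta_c)/q(y\mid x)$, and you absorb $\hat{p}_y$, $\hat{p}_x$, $p(u_t)$ and the multipliers into singleton factors; this reproduces Propositions~\ref{prop:app_obs}--\ref{prop:app_dyn} term by term.

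One correction to your last paragraph concerns your moment-matching argument. $\mathrm{Dir}(\alpha+\bar{n})$ matches the mixture's mean, not $\mathbb{E}[\log\theta]$; Appendix~\ref{app:cavity}(iv) states it is not the M-projection. So your direct check returns a nonzero back-reaction proportional to $(1-r_k)\psi(\alpha_k)+r_k\psi(\alpha_k+1)-\psi(\alpha_k+r_k)$, which is $O(\alpha_k^{-2})$ by concavity of $\psi$. The paper simply drops this residual. It is therefore your fallback clause, not the moment-matching argument, that agrees with the paper.
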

\begin{proof}
See Appendix~\ref{app:mp_stationarity}, which also fixes the multipliers from the singleton stationarity conditions.
\end{proof}

The factor functions~\eqref{eq:kernel_obs}--\eqref{eq:kernel_dyn} are the pair (13a)--(13b) of~\cite{nuijten2026whattypeinference} with the point likelihood replaced by the predictive likelihood and an \emph{information-gain weight} $e^{2\Delta_A}$, $e^{\Delta_B}\ge1$ appended. The weights are not imposed, but emerge from the variation of the information gains.

\begin{algorithm}[t]
\footnotesize
\caption{\textbf{Conjugate planning and learning.} Inputs: counts $\alpha_A,\alpha_B$, beliefs $q(\theta_c),q(x_0)$, horizon $T$, damping $\eta$, tolerance $\epsilon$, sweep budget $N$.}
\label{alg:planner}
\begin{algorithmic}[1]
\Repeat \Comment{one environment step}
    \State $m_A,m_B$ and one-count posteriors $\mathrm{Dir}(\alpha+e)$ from $\alpha_A,\alpha_B$ \Comment{once per call}
    \State $r_t\gets$ uniform, $\bar{n}\gets$ prior predictive, $\mu_{\theta_c\to\cdot}\gets q(\theta_c)$, $\mu_{x_0}\gets q(x_0)$
    \For{$k=1,\dots,N$}
        \State factor functions $\bar{g}_{\mathrm{obs},t},\bar{g}_{\mathrm{dyn},t}$; forward--backward pass \Comment{\eqref{eq:kernel_obs}--\eqref{eq:kernel_dyn}}
        \State refresh $r_t\gets q(\cdot\mid\cdot)$, $\bar{n}$ and the weights $\Delta_{A,t},\Delta_{B,t}$, damped by $\eta$
        \State evaluate $F^{(k)}_T$; \textbf{break if} $|F^{(k)}_T-F^{(k-1)}_T|<\epsilon$ \Comment{\eqref{eq:F_T}}
    \EndFor
    \State execute $u_1\in\arg\max q(u_1)$; observe $y_1$; filter $q(x_1)$; $q(\theta_c)$ by exact Bayes
    \State $\alpha_A\mathrel{+}=\mathbb{E}[e_{y_1x_1\theta_c}]$, $\alpha_B\mathrel{+}=\mathbb{E}[e_{x_1x_0u_1\theta_c}]$ under $q(x_1,x_0)\,q(\theta_c)$; $q(x_0)\gets q(x_1)$
\Until{episode ends}
\end{algorithmic}
\end{algorithm}

\paragraph{Algorithm, learning and the fixed-parameter limit.} With $r_t$ lagged by one sweep, as for the channels of~\cite{nuijten2026whattypeinference}, Proposition~\ref{prop:kernels} is a standard Bethe belief propagation problem on the discrete graph over $\{y_t,x_t,u_t,\theta_c\}$ with factor functions~\eqref{eq:kernel_obs}--\eqref{eq:kernel_dyn}. Algorithm~\ref{alg:planner} solves it with closed-form updates (Appendix~\ref{app:mp_updates}--\ref{app:mp_generic}); the copies $\theta_A^{(t)},\theta_B^{(t)}$ exchange no messages, so the cost is independent of the size of the Dirichlet parameter space. Because $\tilde{p}$ depends on $q$ and the lagged $r_t$ is self-consistent only at convergence, $F_T$ is monitored rather than assumed to descend. The count update is the plain conjugate one: exact stationarity of $F_T$ in $q_{\theta_A}$ adds only a centered score that shifts no count mass (Remark~\ref{rem:app_theta}). As $q_{\theta_A},q_{\theta_B}\to\delta_{\theta^*}$ the weights vanish, $m\to p(\cdot\mid\theta^*,\theta_c)$, and~\eqref{eq:kernel_obs}--\eqref{eq:kernel_dyn} reduce to the fixed-parameter EFE factor functions of~\cite{nuijten2026messagepassingrealizationexpected,nuijten2026whattypeinference} with their enumerable $\theta$ read as $\theta_c$; $K=1$ removes the context (Section~\ref{sec:experiments}).

\section{Experiments}
\label{sec:experiments}

We test whether the information-gain corrections of~\eqref{eq:F_T} produce exploration that a preference-driven planner does not, on two grid POMDPs where knowing the model changes the route and exploration is survivable. Everything outside the objective is shared: the conjugate graph of Section~\ref{sec:methodology} with $K=1$, the exact filter and count update of Algorithm~\ref{alg:planner}, a terminal preference on the goal, and identical horizons, sweeps and flat Dirichlet priors on the unknown block (details in Appendix~\ref{app:experiments}).\footnote{Code: \url{https://github.com/skoghoern/sp-efe-parameter-learning}}

\paragraph{Methods.} \textbf{SP-EFE} is the full scheme with the factor functions~\eqref{eq:kernel_obs}--\eqref{eq:kernel_dyn}. \textbf{SP-Pref-Plan} keeps only the planning correction $r_t(u_t\mid x_{t-1})$ (uniform channels, $\Delta_A=\Delta_B=0$). \textbf{SP-Pref} drops the planning correction too. Both are blind to $m_A$ in planning, since their observation-to-state message is $\sum_y m_A(y\mid x)=1$, and reach an unknown transition only through posterior-mean \emph{optimism}. Four ablations drop terms of~\eqref{eq:F_T} from SP-EFE: $\text{SP-EFE}^{-\mathrm{dyn}}$ (no dynamics channel), $\text{SP-EFE}^{-\Delta_B}$, and $\text{SP-EFE}^{-\Delta_A-\Delta_B}$ with and without the channel (ambiguity, no novelty). \textbf{Random} acts uniformly while learning; \textbf{EFE} enumerates the $4^{H}$ policies under the exact~\eqref{eq:efe}.

\begin{figure}[t]
\centering
\includegraphics[width=0.78\textwidth]{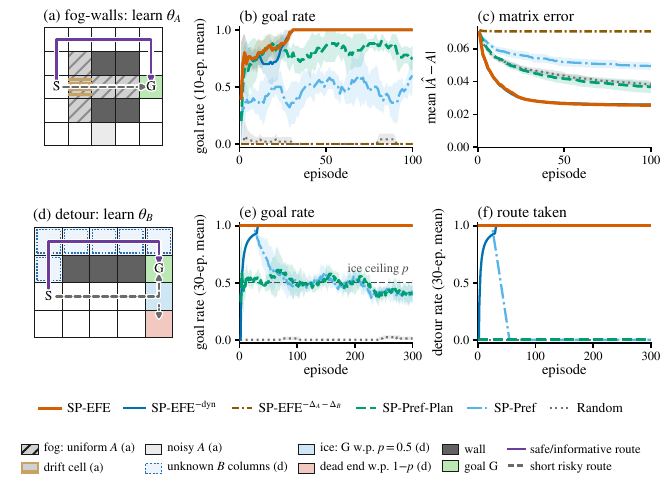}
\caption{\textbf{Top (a--c): learning $\theta_A$} on fog-walls ($B$ known, $H=8$, 5 seeds); \textbf{bottom (d--f): learning $\theta_B$} on the unmapped detour (identity $A$, $H=7$, $T=8$, 3 seeds). Curves are trailing means with a $\pm1$~std band over seeds; the glyph legend gives the cell and route vocabulary of (a) and (d).}
\label{fig:learning}
\end{figure}

\paragraph{Learning the observation model (fog-walls).} In the fog-walls maze (Figure~\ref{fig:learning}a) the direct 4-step route crosses a drift cell and an ambiguous fog block, whereas the 8-step top row identifies every state; walls make mislocalization cost bumps rather than the episode. The top row pays off only for a planner that \emph{knows} it is unambiguous: with $A,B$ known, SP-EFE takes it on 30/30 seeds, whereas SP-Pref-Plan, whose route is $A$-independent, enters the fog and scores 18/30 (Appendix~\ref{app:exp_controls}). When learning $A$ (Figure~\ref{fig:learning}b--c, Table~\ref{tab:results}), SP-EFE reaches a goal rate of $1.00$ from episode ${\approx}30$ on every seed, while SP-Pref-Plan plateaus at $0.81$ and SP-Pref at $0.48$ with $1.4$--$1.9\times$ the matrix error: by construction, learning cannot change their route. The curve of SP-EFE is a novelty phase followed by a switch: under the flat prior every cell is equally novel, so the agent enters the fog. Once $\mathbb{H}[m_A(\cdot\mid x)]$ is maximal in the fog and near zero elsewhere, ambiguity reroutes the plan along a deterministic row and the diminished $\Delta_A$ no longer pulls it back.

\paragraph{Learning the transition model (unmapped detour).} The unmapped detour (Figure~\ref{fig:learning}d) caps a planner that never learns at $p=0.5$, and optimism is priced out by design: a flat column spreads the successor over all $N=20$ cells, which reaches the goal within the horizon with probability $0.265<p$, so under its own objective SP-Pref-Plan \emph{should} gamble. Table~\ref{tab:results} reports three scenarios. With $B$ known, every SP-EFE variant and SP-Pref-Plan take the detour on 300/300 episodes, so learning is \emph{sufficient}; SP-Pref gambles even with a perfect model (Appendix~\ref{app:exp_controls}). With $B$ frozen at the prior, SP-Pref-Plan sits at $146\approx p$ and never opens a door, so learning is \emph{necessary}. When learning $B$, SP-EFE takes the detour from episode~1 and never leaves it, whereas SP-Pref-Plan gambles on all 300 episodes and $m_B$ ends equal to its prior. SP-Pref walks the detour only while the smeared doors still pay its action-entropy bonus, then gambles for the remaining 275 episodes: a better model removes the only term that steered it to the safe route. Enumerated EFE takes the detour on 100/100 episodes, so the exploration is a property of the objective, not of the message passing.

\begin{table}[t]
\centering
\caption{\textbf{Results and ablation.} Fog-walls, learning $\theta_A$ (100 episodes, 5 seeds): goals, goals in the last 20, final matrix error. Unmapped detour, learning $\theta_B$ (300 episodes, 3 seeds): goals with $B$ known, frozen at the prior, or learned, and learned-$B$ goals via the detour. Lower block: ablations of~\eqref{eq:F_T}. $^\dagger$Entropy-following: with frozen flat doors, the most uncertain column and the detour coincide.}
\label{tab:results}
\scriptsize
\setlength{\tabcolsep}{3.2pt}
\renewcommand{\arraystretch}{1.0}
\begin{tabular}{@{}l ccc c cccc@{}}
\toprule
& \multicolumn{3}{c}{fog-walls, learn $\theta_A$} & & \multicolumn{4}{c}{unmapped detour, learn $\theta_B$} \\
\cmidrule{2-4}\cmidrule{6-9}
Planner & goals/100 & last 20 & err$_A$ & & known & frozen & learned & detour \\
\midrule
SP-EFE & $\mathbf{95.8\pm0.7}$ & \textbf{20.0} & \textbf{0.026} & & \textbf{300} & $300^{\dagger}$ & \textbf{300} & \textbf{300} \\
SP-Pref-Plan & $81.4\pm4.8$ & 16.2 & 0.037 & & 300 & 146 & $146\pm7$ & 0 \\
SP-Pref & $47.4\pm3.1$ & 9.6 & 0.049 & & 146 & $300^{\dagger}$ & $158\pm8$ & 25 \\
Random & $1.0\pm0.9$ & 0.2 & 0.038 & & 1 & 1 & $1.3\pm0.5$ & 0 \\
\midrule
$\text{SP-EFE}^{-\mathrm{dyn}}$ & $94.4\pm2.0$ & 20.0 & 0.026 & & 300 & 0 & 298 & 298 \\
$\text{SP-EFE}^{-\Delta_B}$ & $94.4\pm2.2$ & 20.0 & 0.025 & & 300 & $300^{\dagger}$ & 300 & 300 \\
$\text{SP-EFE}^{-\Delta_A-\Delta_B}$ & $0.0\pm0.0$ & 0.0 & 0.071 & & 300 & $300^{\dagger}$ & 300 & 300 \\
$\text{SP-EFE}^{-\Delta_A-\Delta_B-\mathrm{dyn}}$ & $0.0\pm0.0$ & 0.0 & 0.071 & & 300 & 146 & $146\pm7$ & 0 \\
\bottomrule
\end{tabular}
\end{table}

\paragraph{Ablation.} The lower block of Table~\ref{tab:results} removes terms of~\eqref{eq:F_T}. \emph{Novelty is the drive.} Without $\Delta_A$, the channels-only planner keeps the ambiguity penalty but has nothing that rewards resolving it; from a flat prior every unvisited cell is maximally ambiguous ($\log25$ nats per step against a preference of $4$), and the agent bumps into the wall beside the start on all 500 episodes. Without $\Delta_B$, the learned-$B$ score is carried by the dynamics channel, which the frozen column exposes as entropy-following. \emph{The dynamics channel is dispensable.} Removing it changes nothing on fog-walls and costs two episodes on the detour; at its fixed point the channel cancels $m_B$ on its support, so a smeared learned $m_B$ loses the exploitation signal.

\section{Conclusion}
\label{sec:conclusion}

This paper extended entropy-corrected EFE-based planning~\cite{nuijten2026whattypeinference} from inference over a finite set of known models to conjugate \emph{learning} of the observation and transition matrices, as the minimization of a single variational objective. Frozen per-factor priors keep every factor belief conjugate and preserve the parameter--datum joint that mean-field destroys. Thus the parameter side collapses into predictive likelihoods, while the information gains survive as weights on the factor functions. On two grid POMDPs those weights are the drive: the scheme learns $\theta_A$ or $\theta_B$ and settles on the safe route where SP-MP with the same inference update plateaus or gambles.

\paragraph{Limitations and future work.} The empirical advantage has conditions, all visible in the designs of Section~\ref{sec:experiments}: exploration must be survivable, the unknown region small enough for novelty to satiate, and model knowledge must change the route, since greedy optimism already wins on a unique-path task with a hard deadline. The objective is non-convex and the factor functions are built from lagged factors, so damped descent need not be monotone and $F_T$ measures deviations without certifying convergence. Future work includes score-corrected learning, and sophisticated planning beyond the frozen parameter priors.

\begin{credits}
\subsubsection{\discintname}
The authors have no competing interests to declare that are relevant to the content of this article.
\end{credits}

\bibliographystyle{splncs04}
\bibliography{mybibliography}
\clearpage
\appendix
\renewcommand{\theHsection}{appendix.\thesection}
\renewcommand{\theHsubsection}{appendix.\thesubsection}
\section{The Objective: Proofs and Constructions}
\label{app:priors}

Notation: $\theta=(\theta_A,\theta_B,\theta_c)$, $\boldsymbol{z}=(\boldsymbol{y},\boldsymbol{x},\boldsymbol{u},\theta)$, $\mathbb{E}_q=\mathbb{E}_{q(\boldsymbol{z})}$; $h[q(\cdot\mid x)]$ is the entropy of a conditional as a function of $x$ and $\mathbb{H}[q(\cdot\mid x)]=\mathbb{E}_{q(x)}h[q(\cdot\mid x)]$ the conditional-entropy scalar; $\mathrm{Dir}(\alpha)$ with $\alpha_0=\sum_k\alpha_k$ has density $\prod_k\theta_k^{\alpha_k-1}/\mathrm{B}(\alpha)$. Time indices are suppressed where no confusion arises.

\subsection{Epistemic Priors Recover the EFE}
\label{app:priors_efe}

\begin{lemma}[Prior contributions]
\label{lem:app_priors}
For the priors~\eqref{eq:prior_u}--\eqref{eq:prior_xxu} and any joint $q$ possessing the conditionals they refer to,
\begin{align}
    -\mathbb{E}_q[\log\tilde{p}(u_t)] &= -\mathbb{H}[q(x_t\mid x_{t-1},u_t)], \label{eq:app_lemma_u}\\
    -\mathbb{E}_q[\log\tilde{p}(x_t)] &= +\mathbb{H}[q(y_t\mid x_t,\theta_A,\theta_c)], \label{eq:app_lemma_x}\\
    -\mathbb{E}_q[\log\tilde{p}(y_t,x_t)] &= -I(\theta_A,\theta_c;y_t\mid x_t) = -\mathcal{I}_{c,t}-\mathcal{I}_{A,t}, \label{eq:app_lemma_yx}\\
    -\mathbb{E}_q[\log\tilde{p}(x_t,x_{t-1},u_t,\theta_c)] &= -I(\theta_B;x_t,u_t\mid x_{t-1},\theta_c)=-\mathcal{I}_{B,t}, \label{eq:app_lemma_xxu}
\end{align}
up to the normalizing constants of the priors.
\end{lemma}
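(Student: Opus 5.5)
The plan is to treat each of the four identities separately. In every case the argument has the same shape. Take the logarithm of the prior, which by construction is (minus) a local functional of $q$ evaluated at the prior's own arguments, plus a normalizing constant. Then take its expectation under the full joint $q$. Because each exponent depends only on the prior's arguments, $\mathbb{E}_q$ reduces to an expectation under the marginal of $q$ on those arguments. The inner expectation then collapses by the tower property into a conditional entropy or a mutual information. The normalizing constants of the $\tilde{p}$ do not depend on the variables being averaged, so they are dropped, which is what the qualifier ``up to the normalizing constants'' covers.

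\textbf{The two entropy identities.} For \eqref{eq:app_lemma_u}, I will write
\[
-\log\tilde{p}(u_t)=\mathbb{E}_{q(x_t,x_{t-1}\mid u_t)}\log q(x_t\mid x_{t-1},u_t)+\text{const}.
\]
Averaging over $q(u_t)$ merges $q(u_t)\,q(x_t,x_{t-1}\mid u_t)$ into $q(x_t,x_{t-1},u_t)$. This gives $\mathbb{E}_q[\log q(x_t\mid x_{t-1},u_t)]=-\mathbb{H}[q(x_t\mid x_{t-1},u_t)]$.

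For \eqref{eq:app_lemma_x}, $-\log\tilde{p}(x_t)$ is $\mathbb{E}_{q(\theta_A,\theta_c\mid x_t)}h[q(y_t\mid x_t,\theta_A,\theta_c)]$. Averaging over $q(x_t)$ yields $\mathbb{E}_{q(x_t,\theta_A,\theta_c)}h[\cdot]$. By the definition of $\mathbb{H}[q(\cdot\mid\cdot)]$ in the notation paragraph, this is exactly $\mathbb{H}[q(y_t\mid x_t,\theta_A,\theta_c)]$.

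\textbf{The two information identities.} For \eqref{eq:app_lemma_yx}, averaging $D_{\mathrm{KL}}[q(\theta_A,\theta_c\mid y_t,x_t)\,\|\,q(\theta_A,\theta_c\mid x_t)]$ over $q(y_t,x_t)$ gives the standard expression for the conditional mutual information $I(\theta_A,\theta_c;y_t\mid x_t)$, since the reference is the conditional of the same joint $q$. The sign is flipped because the prior carries $\exp(+D_{\mathrm{KL}})$. The chain rule for mutual information then splits this as
\[
I(\theta_c;y_t\mid x_t)+I(\theta_A;y_t\mid x_t,\theta_c)=\mathcal{I}_{c,t}+\mathcal{I}_{A,t}.
\]
For \eqref{eq:app_lemma_xxu}, averaging over $q(x_t,x_{t-1},u_t,\theta_c)$ gives
\[
\mathbb{E}\,D_{\mathrm{KL}}[q(\theta_B\mid x_t,x_{t-1},u_t,\theta_c)\,\|\,q(\theta_B\mid x_{t-1},\theta_c)].
\]
This equals $I(\theta_B;x_t,u_t\mid x_{t-1},\theta_c)$, because the cavity $q(\theta_B\mid x_{t-1},\theta_c)$ is the marginal of the posterior over $(x_t,u_t)$ given $(x_{t-1},\theta_c)$ under the same $q$.

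\textbf{Main obstacle.} The only delicate point is the hypothesis that $q$ ``possesses the conditionals they refer to'' consistently. The identification of averaged KL with mutual information requires the cavity to be the exact $q$-marginal of the posterior. If the inner conditionals were drawn from different regions or were approximated, that marginalization would fail. I will make explicit that all conditionals are taken from the single joint $q$, and argue each identity under that assumption. The cavity projection of Appendix~\ref{app:cavity} is a separate later approximation and does not enter this lemma.
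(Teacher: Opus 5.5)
Your proposal is correct and follows the paper's proof essentially step for step: $-\log\tilde{p}$ is the prior's local functional up to a constant, averaging over the marginal of the prior's arguments turns it into a conditional entropy or an expected posterior--cavity divergence (a conditional mutual information), and the chain rule splits $I(\theta_A,\theta_c;y_t\mid x_t)$ into salience and novelty. You also state explicitly that the cavity must be the exact $q$-marginal of the posterior, so the projection of Appendix~\ref{app:cavity} plays no role here; the paper leaves this implicit in its hypothesis on $q$, and making it explicit is a useful clarification.
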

\begin{proof}
Each prior is the exponential of a functional of the variables it is indexed by; taking $-\log$ and the outer expectation over those variables gives the stated quantity by definition. For~\eqref{eq:app_lemma_u}, $\mathbb{E}_{q(u_t)}\mathbb{E}_{q(x_t,x_{t-1}\mid u_t)}[\log q(x_t\mid x_{t-1},u_t)]=-\mathbb{H}[q(x_t\mid x_{t-1},u_t)]$; for~\eqref{eq:app_lemma_x}, $\mathbb{E}_{q(x_t)}\mathbb{E}_{q(\theta_A,\theta_c\mid x_t)}h[q(y_t\mid x_t,\theta_A,\theta_c)]=\mathbb{H}[q(y_t\mid x_t,\theta_A,\theta_c)]$; for~\eqref{eq:app_lemma_yx}, $\mathbb{E}_{q(y_t,x_t)}D_{\mathrm{KL}}[q(\theta_A,\theta_c\mid y_t,x_t)\,\|\,q(\theta_A,\theta_c\mid x_t)]$ is the conditional mutual information $I(\theta_A,\theta_c;y_t\mid x_t)$ written as an expected divergence between posterior and cavity, and the chain rule $I(\theta_A,\theta_c;y_t\mid x_t)=I(\theta_c;y_t\mid x_t)+I(\theta_A;y_t\mid x_t,\theta_c)$ gives the split into salience and novelty of Section~\ref{sec:generative_model}; likewise~\eqref{eq:app_lemma_xxu} with $(x_t,u_t)$ as the datum and $(x_{t-1},\theta_c)$ as the conditioning variables. \qed
\end{proof}

\begin{proof}[of Corollary~\ref{cor:dual_priors}]
Split $\log\tilde{p}$ into the unbiased model, the goals and the epistemic priors,
\begin{equation}
    \log\tilde{p}=\log p+\sum_{t=1}^{T}\log\hat{p}_x(x_t)\,\hat{p}_y(y_t)+\sum_{t=1}^{T}\log\big[\tilde{p}(u_t)\,\tilde{p}(x_t)\,\tilde{p}(y_t,x_t)\,\tilde{p}(x_t,x_{t-1},u_t,\theta_c)\big].
\end{equation}
By Lemma~\ref{lem:app_priors},
\begin{equation}
    \begin{aligned}
    F_{\tilde{p}}[q]=\underbrace{\mathbb{E}_q\Big[\log\frac{q}{p}\Big]}_{B[q]}
    +\sum_{t=1}^{T}\Big(&-\mathbb{E}_q\big[\log\hat{p}_x(x_t)\hat{p}_y(y_t)\big]-\mathbb{H}[q(x_t\mid x_{t-1},u_t)]\\
    &+\mathbb{H}[q(y_t\mid x_t,\theta_A,\theta_c)]-\mathcal{I}_{c,t}-\mathcal{I}_{A,t}-\mathcal{I}_{B,t}\Big).
    \end{aligned}
\end{equation}
Expanding~\eqref{eq:efe} and taking $\mathbb{E}_{q(u_t)}$,
\begin{equation}
    \begin{aligned}
    \mathbb{E}_{q(u_t)}[G(u_t)]=&-\mathbb{E}_q\big[\log\hat{p}_x(x_t)\hat{p}_y(y_t)\big]-\mathbb{H}[q(x_t\mid x_{t-1},u_t)]\\
    &+\mathbb{E}_q[-\log p(y_t\mid x_t,\theta_A,\theta_c)]-\mathcal{I}_{c,t}-\mathcal{I}_{A,t}-\mathcal{I}_{B,t},
    \end{aligned}
\end{equation}
which coincides with the $t$-th summand above whenever $q(y_t\mid x_t,\theta_A,\theta_c)=p(y_t\mid x_t,\theta_A,\theta_c)$. For the factor beliefs of Section~\ref{sec:family}, $q_{\mathrm{obs},t}(y_t\mid x_t,\theta_A,\theta_c)\propto p(y_t\mid x_t,\theta_A,\theta_c)\,\hat{p}_y(y_t)$, so the identity is exact for uniform $\hat{p}_y$ (the setting of Section~\ref{sec:experiments}); in general the ambiguity term of $G$ is to be read as $\mathbb{H}[q(y_t\mid x_t,\theta_A,\theta_c)]$, the entropy of the agent's own predictive observation channel, as in~\cite{nuijten2026whattypeinference}. Summing over $t$ gives the Corollary. \qed
\end{proof}

\subsection{Entropy Corrections}
\label{app:priors_entropy}

Adding the planning correction $+\mathbb{H}[q(u_t\mid x_{t-1})]$~\cite{lázarogredilla2025typeinferenceplanning} to the sum of~\eqref{eq:app_lemma_u}--\eqref{eq:app_lemma_xxu} gives the raw per-step correction
\begin{equation}
    \label{eq:app_deltaF_raw}
    \Delta F_t=\mathbb{H}[q(y_t\mid x_t,\theta_A,\theta_c)]-\mathcal{I}_{c,t}-\mathcal{I}_{A,t}-\mathbb{H}[q(x_t\mid x_{t-1},u_t)]-\mathcal{I}_{B,t}+\mathbb{H}[q(u_t\mid x_{t-1})].
\end{equation}
The identity $\mathbb{H}[q(y_t\mid x_t,\theta_A,\theta_c)]=\mathbb{H}[q(y_t\mid x_t)]-I(\theta_A,\theta_c;y_t\mid x_t)=\mathbb{H}[q(y_t\mid x_t)]-\mathcal{I}_{c,t}-\mathcal{I}_{A,t}$ turns~\eqref{eq:app_deltaF_raw} into~\eqref{eq:deltaF}: the ambiguity prior contributes one unit of each observation-side gain through its $\theta$-coupling, the novelty prior a second. Under any family with $q(y_t,x_t,\theta_c,\theta_A)=q(y_t,x_t,\theta_c)\,q(\theta_A)$ the Dirichlet gain $\mathcal{I}_{A,t}$ vanishes identically and prior~\eqref{eq:prior_yx} retains only its salience part; the same holds for $\mathcal{I}_{B,t}$ and~\eqref{eq:prior_xxu}.

\subsection{The Conjugate Factor Belief}
\label{app:family}

\begin{lemma}[Entrywise conjugacy]
\label{lem:app_conjugacy}
For one column $\alpha=\alpha_A(\cdot\mid x,\theta_c)$ and any entry $y$,
$\theta_y\,\mathrm{Dir}(\theta\mid\alpha)=\dfrac{\alpha_y}{\alpha_0}\,\mathrm{Dir}(\theta\mid\alpha+e_y)$.
\end{lemma}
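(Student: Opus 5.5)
The identity is a direct computation with the Dirichlet density, so we will simply multiply out both sides and compare normalizing constants. Using the density convention fixed in the notation paragraph, $\mathrm{Dir}(\theta\mid\alpha)=\prod_k\theta_k^{\alpha_k-1}/\mathrm{B}(\alpha)$ with $\mathrm{B}(\alpha)=\prod_k\Gamma(\alpha_k)/\Gamma(\alpha_0)$, multiplying by $\theta_y$ raises the exponent of entry $y$ by one. This gives
\[
\theta_y\,\mathrm{Dir}(\theta\mid\alpha)=\frac{\prod_k\theta_k^{(\alpha+e_y)_k-1}}{\mathrm{B}(\alpha)}=\frac{\mathrm{B}(\alpha+e_y)}{\mathrm{B}(\alpha)}\,\mathrm{Dir}(\theta\mid\alpha+e_y).
\]
The statement therefore reduces to computing the ratio of multivariate Beta functions.

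For the ratio, only the $y$-th Gamma factor in the numerator and the Gamma of the total concentration change. The total concentration of $\alpha+e_y$ is $\alpha_0+1$. Applying $\Gamma(z+1)=z\,\Gamma(z)$ twice gives
\[
\frac{\mathrm{B}(\alpha+e_y)}{\mathrm{B}(\alpha)}=\frac{\Gamma(\alpha_y+1)}{\Gamma(\alpha_y)}\cdot\frac{\Gamma(\alpha_0)}{\Gamma(\alpha_0+1)}=\frac{\alpha_y}{\alpha_0},
\]
which is the claimed prefactor.

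As a consistency check, integrating both sides over the simplex yields $\mathbb{E}_{\mathrm{Dir}(\alpha)}[\theta_y]=\alpha_y/\alpha_0$. This is the familiar posterior predictive $m_A$, which matches how the lemma is used in~\eqref{eq:factor_belief}. The column-wise statement lifts to the full product prior $p(\theta_A)$, because the likelihood $\theta_A(y\mid x,\theta_c)$ touches only the single column $(x,\theta_c)$. All other column factors pass through unchanged, which gives the entrywise form with $e_{yx\theta_c}$.

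There is no real obstacle here. The only points needing care are bookkeeping: the unit increment must be on the single entry $y$ and on the total $\alpha_0$, and $\alpha_y>0$ is needed so that the densities are defined.
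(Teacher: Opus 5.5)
Your proof is correct and follows the paper's argument exactly: absorb $\theta_y$ into the exponent to obtain $\mathrm{B}(\alpha+e_y)/\mathrm{B}(\alpha)\,\mathrm{Dir}(\theta\mid\alpha+e_y)$, then evaluate the Beta ratio with $\Gamma(z+1)=z\,\Gamma(z)$, noting that the other columns are untouched. Your extra consistency check via $\mathbb{E}_{\mathrm{Dir}(\alpha)}[\theta_y]=\alpha_y/\alpha_0$ is a sound sanity check but is not needed.
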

\begin{proof}
$\theta_y\prod_k\theta_k^{\alpha_k-1}/\mathrm{B}(\alpha)=\prod_k\theta_k^{\alpha_k+[k=y]-1}/\mathrm{B}(\alpha)=\dfrac{\mathrm{B}(\alpha+e_y)}{\mathrm{B}(\alpha)}\,\mathrm{Dir}(\theta\mid\alpha+e_y)$ and $\dfrac{\mathrm{B}(\alpha+e_y)}{\mathrm{B}(\alpha)}=\dfrac{\Gamma(\alpha_y+1)\,\Gamma(\alpha_0)}{\Gamma(\alpha_y)\,\Gamma(\alpha_0+1)}=\dfrac{\alpha_y}{\alpha_0}$. Other columns are untouched. \qed
\end{proof}

Applied, for each value of $\theta_c$, to the factor belief of Section~\ref{sec:family},
\begin{equation*}
    q_{\mathrm{obs},t}(y_t,x_t,\theta_c,\theta_A^{(t)})\propto p(y_t\mid x_t,\theta_A^{(t)},\theta_c)\,\mu_{y_t}(y_t)\,\mu_{x_t}(x_t)\,\mu_{\theta_c}(\theta_c)\,\mathrm{Dir}(\theta_A^{(t)}\mid\alpha_A),
\end{equation*}
this gives~\eqref{eq:factor_belief}. The identity needs two things: the message into the copy must be free of $(y,x,\theta_c)$---true because the copy has degree two and its other neighbor is the prior node---and the factor function must be a monomial in $\theta_A$ for each value of $\theta_c$. A $\theta_A$-conditioned channel inside the factor function, such as $r(y\mid x,\theta_A,\theta_c)^2$ with the shared parameter of~\cite{nuijten2026whattypeinference} read as $\theta_A$, has a normalizer that is not polynomial in $\theta_A$ and would destroy the identity; this is why the $\theta_A$-epistemic content enters the factor function as a scalar weight per discrete entry, while the finite $\theta_c$ retains the channel form (Proposition~\ref{prop:kernels}).

\begin{lemma}[Elimination of the $\theta_A$-conditional]
\label{lem:app_elimination}
Let the observation region carry the belief $q_{yx,t}(y,x,\theta_c)\,\nu_t(\theta_A\mid y,x,\theta_c)$ for an arbitrary normalized $\nu_t$, and let the copy's prior region carry the induced marginal $\sum_{y,x,\theta_c}q_{yx,t}\nu_t$. Their joint contribution to the Bethe free energy of the frozen graph is
\begin{equation}
    \label{eq:app_elimination}
    -\mathbb{H}[q_{yx,t}]-\mathbb{E}_{q_{yx,t}}[\log m_A]+\sum_{y,x,\theta_c}q_{yx,t}(y,x,\theta_c)\,D_{\mathrm{KL}}\big[\nu_t(\cdot\mid y,x,\theta_c)\,\big\|\,\mathrm{Dir}(\alpha_A+e_{yx\theta_c})\big],
\end{equation}
minimized uniquely at $\nu_t=\mathrm{Dir}(\alpha_A+e_{yx\theta_c})=q_{\mathrm{obs},t}(\theta_A\mid x,y,\theta_c)$.
\end{lemma}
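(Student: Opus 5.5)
The plan is to write the two Bethe terms explicitly and reduce them to \eqref{eq:app_elimination} using the entrywise conjugacy of Lemma~\ref{lem:app_conjugacy}. The observation region contributes $D_{\mathrm{KL}}[q_{yx,t}\nu_t\,\|\,p(y\mid x,\theta_A,\theta_c)\,\mathrm{Dir}(\theta_A\mid\alpha_A)]$ once the copy's prior factor is absorbed. The reason for absorbing it is that the copy edge $\theta_A^{(t)}$ has degree two, connecting the observation factor and the prior node. Its Bethe contributions are therefore the prior node's divergence $D_{\mathrm{KL}}[q_{\theta}\,\|\,\mathrm{Dir}(\alpha_A)]$ plus the counting term $(2-1)\mathbb{H}[q_\theta]$. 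Here $q_\theta=\sum q_{yx,t}\nu_t$ is the induced marginal. The sum of these two is $-\mathbb{E}_{q_\theta}[\log\mathrm{Dir}(\theta_A\mid\alpha_A)]$. Since this is linear in $q_\theta$, it equals $-\mathbb{E}_{q_{yx,t}\nu_t}[\log\mathrm{Dir}(\theta_A\mid\alpha_A)]$ under the region belief. The entropy of $q_\theta$ cancels exactly; this cancellation is the point of the degree-two construction. The preference $\hat p_y$ and the incoming discrete messages belong to other regions or coordinates and are not part of the stated contribution, so I would leave them aside.

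What remains is the region entropy and energy, $\mathbb{E}_{q_{yx,t}\nu_t}[\log q_{yx,t}+\log\nu_t-\log\theta_A(y\mid x,\theta_c)-\log\mathrm{Dir}(\theta_A\mid\alpha_A)]$. Applying Lemma~\ref{lem:app_conjugacy} pointwise in $(y,x,\theta_c)$ gives $\theta_A(y\mid x,\theta_c)\,\mathrm{Dir}(\theta_A\mid\alpha_A)=m_A(y\mid x,\theta_c)\,\mathrm{Dir}(\theta_A\mid\alpha_A+e_{yx\theta_c})$. The energy therefore splits into $-\log m_A$, which is independent of $\theta_A$, and $-\log\mathrm{Dir}(\alpha_A+e_{yx\theta_c})$. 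Grouping the $\nu_t$-terms inside the inner expectation over $\theta_A$ yields $\sum q_{yx,t}\,D_{\mathrm{KL}}[\nu_t(\cdot\mid y,x,\theta_c)\,\|\,\mathrm{Dir}(\alpha_A+e_{yx\theta_c})]$. The leftover discrete part is $\mathbb{E}_{q_{yx,t}}[\log q_{yx,t}-\log m_A]=-\mathbb{H}[q_{yx,t}]-\mathbb{E}_{q_{yx,t}}[\log m_A]$, which is \eqref{eq:app_elimination}. All manipulations are justified because $\nu_t$ is normalized for each discrete entry, so $\mathbb{E}_{\nu_t}$ of a $\theta$-free quantity is that quantity.

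Minimization then follows from Gibbs' inequality. The first two terms do not depend on $\nu_t$, and each divergence is nonnegative and vanishes iff $\nu_t(\cdot\mid y,x,\theta_c)=\mathrm{Dir}(\alpha_A+e_{yx\theta_c})$. This gives uniqueness on the support of $q_{yx,t}$. Off the support, $\nu_t$ is irrelevant and can be set by convention to the same Dirichlet, consistent with \eqref{eq:factor_belief}. The main obstacle I expect is the bookkeeping in the first step: confirming that the frozen prior node's region belief is exactly the induced marginal and that its counting exponent is $d_i-1=1$. Without this the entropy of the Dirichlet mixture $q_\theta$ would survive as a non-closed-form term. I would verify it against the degree convention in \eqref{eq:bethe}, treating the prior node as a singleton-factor region whose belief must equal the edge marginal by consistency.
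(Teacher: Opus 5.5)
Your proposal is correct and follows the paper's proof essentially step for step. In both, the degree-two copy's prior-node divergence and its counting entropy combine into $-\mathbb{E}_{q(\theta_A^{(t)})}[\log\mathrm{Dir}(\theta_A\mid\alpha_A)]$, this term is rewritten under the region belief $q_{yx,t}\nu_t$, and it is then merged with the observation region through Lemma~\ref{lem:app_conjugacy}. Your added remark that uniqueness holds only on the support of $q_{yx,t}$ is a small but correct sharpening of the stated claim.
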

\begin{proof}
The copy has degree two, so its region term $D_{\mathrm{KL}}[q(\theta_A^{(t)})\,\|\,\mathrm{Dir}(\alpha_A)]$ and its counting term $+\mathbb{H}[q(\theta_A^{(t)})]$ combine to $-\mathbb{E}_{q(\theta_A^{(t)})}[\log\mathrm{Dir}(\theta_A\mid\alpha_A)]$. Adding the observation region $D_{\mathrm{KL}}[q_{yx,t}\nu_t\,\|\,p(y\mid x,\theta_A,\theta_c)]$ and using $q(\theta_A^{(t)})=\sum_{y,x,\theta_c}q_{yx,t}\nu_t$,
\begin{equation*}
    -\mathbb{H}[q_{yx,t}]+\sum_{y,x,\theta_c}q_{yx,t}\,\mathbb{E}_{\nu_t}\Big[\log\frac{\nu_t(\theta_A\mid y,x,\theta_c)}{\mathrm{Dir}(\theta_A\mid\alpha_A)\,p(y\mid x,\theta_A,\theta_c)}\Big],
\end{equation*}
and Lemma~\ref{lem:app_conjugacy} rewrites the denominator as $m_A(y\mid x,\theta_c)\,\mathrm{Dir}(\theta_A\mid\alpha_A+e_{yx\theta_c})$. \qed
\end{proof}

A coordinate whose optimum is a closed-form function of the others is substituted out; it is also not storable, $\theta_A$ being continuous. Two consequences. First, the parameter energy and entropy of the region collapse jointly into the \emph{predictive} energy $-\log m_A$; nothing $\theta_A$-dependent survives in the base free energy (used in Appendix~\ref{app:objective_proof}). Second, the corrections~\eqref{eq:deltaF} are \emph{evaluated} at $\nu_t=\mathrm{Dir}(\alpha_A+e_{yx\theta_c})$ and not re-minimized over $\nu_t$: read as a functional of a free $\nu_t$, $-2\mathcal{I}_{A,t}$ would reward arbitrary coupling up to its ceiling $\mathbb{H}_q[y_t\mid x_t,\theta_c]$. The dynamics side is identical with columns indexed by $(x_{t-1},u_t,\theta_c)$. The finite $\theta_c$ is treated differently: a free $\theta_c$-axis on the discrete beliefs costs a factor $K$ and is kept, so its conditionals are exact and no projection is needed.

The step-wise copies are what make this consistent. A single shared $\theta_A$ would impose the calibration constraint $\sum_{y,x,\theta_c}q_{\mathrm{obs},t}=q(\theta_A)$ for every $t$, but $\sum_{y,x,\theta_c}q_{\mathrm{obs},t}=\mathrm{Dir}(\alpha_A)\sum_{y,x,\theta_c}q_{yx,t}(y,x,\theta_c)\,\theta_A(y\mid x,\theta_c)/m_A(y\mid x,\theta_c)$ equals $\mathrm{Dir}(\alpha_A)$ only when $q_{yx,t}(y\mid x,\theta_c)=m_A(y\mid x,\theta_c)$, i.e.\ when planning leaves the block conditionals at their predictive values; enforcing it would write $\sim T$ imagined counts into $\alpha_A$ per sweep and eliminate novelty inside the planning loop. With step-wise copies the constraint holds trivially and the approximation moves out of the constraint set into the planning-time model.

\subsection{The Cavity and Its Projection}
\label{app:cavity}

Fix a column $(x,\theta_c)$, write $r_y:=q_{yx,t}(y\mid x,\theta_c)$, $p_y:=\mathrm{Dir}(\alpha+e_y)$, and let $\bar{q}:=\sum_y r_y\,p_y$ be the exact cavity~\eqref{eq:cavity}. The projection $\tilde{q}:=\mathrm{Dir}(\alpha+r)$ has the following properties.
\begin{enumerate}
    \item[(i)] \emph{Mean matching.} $\mathbb{E}_{\bar{q}}[\theta_k]=\sum_y r_y\frac{\alpha_k+[k=y]}{\alpha_0+1}=\frac{\alpha_k+r_k}{\alpha_0+1}=\mathbb{E}_{\tilde{q}}[\theta_k]$.
    \item[(ii)] \emph{Geometric mixture.} $\prod_y p_y^{\,r_y}\propto\prod_k\theta_k^{\sum_y r_y(\alpha_k+[k=y]-1)}=\prod_k\theta_k^{\alpha_k+r_k-1}\propto\tilde{q}$: the projection averages the branches' natural parameters. Equivalently $\tilde{q}=\arg\min_{\tilde{q}'}\sum_y r_y D_{\mathrm{KL}}[\tilde{q}'\,\|\,p_y]$, whereas $\bar{q}=\arg\min_{\tilde{q}'}\sum_y r_y D_{\mathrm{KL}}[p_y\,\|\,\tilde{q}']$.
    \item[(iii)] \emph{Structured VMP.} $\tilde{q}$ is what variational message passing returns on the $\theta$-edge under $q(y,\theta\mid x)=q(y\mid x)\,q(\theta\mid x)$: prior times $\exp\mathbb{E}_{q(y\mid x)}[\log\theta_y]=\prod_y\theta_y^{r_y}$~\cite{senoz2021variational}; the sum-product message $\sum_y r_y\theta_y$ gives the arithmetic mixture $\bar{q}$ instead.
    \item[(iv)] \emph{Not the M-projection.} $\arg\min_{\tilde{q}'\in\mathrm{Dir}}D_{\mathrm{KL}}[\bar{q}\,\|\,\tilde{q}']$ would match $\mathbb{E}[\log\theta_k]$. The mixture carries between-branch variance that a single Dirichlet drops,
    $\mathbb{E}_{\bar{q}}[\theta_k^2]-\mathbb{E}_{\tilde{q}}[\theta_k^2]=\frac{r_k(1-r_k)}{(\alpha_0+1)(\alpha_0+2)}$,
    so $\tilde{q}$ is slightly overconfident and $\Delta_{A,t}$ slightly overstated, with relative error $O(1/\alpha_0)$ that vanishes as counts grow.
    \item[(v)] \emph{Exactness and closure.} $\tilde{q}=\bar{q}$ whenever $r$ is one-hot; $\tilde{q}$ stays Dirichlet under linear damping of $r$; and $D_{\mathrm{KL}}[\mathrm{Dir}\,\|\,\mathrm{Dir}]$ is closed-form in digamma and log-gamma functions.
\end{enumerate}
The projection is used for the weight arrays only; the discrete messages close through $m_A,m_B$ without it. Its one side effect on the stationarity conditions is quantified in Lemma~\ref{lem:cavity}.

\subsection{Proof of Proposition~\ref{prop:objective}}
\label{app:objective_proof}

\paragraph{Degrees and Bethe free energy.} On the frozen graph, $y_t$, $u_t$, $x_0$ and every copy $\theta_A^{(t)},\theta_B^{(t)}$ have one non-singleton neighbor besides their prior or goal node; $x_t$ for $1\le t<T$ has three ($f_{\mathrm{obs},t},f_{\mathrm{dyn},t},f_{\mathrm{dyn},t+1}$), $x_T$ two, and $\theta_c$ all $2T$. For a variable with $k$ non-singleton neighbors and singleton factor $f_i$, the singleton region and the counting term combine to $(k-1)\,\mathbb{H}[q_i]-\mathbb{E}_{q_i}[\log f_i]$. Hence
\begin{equation}
    \label{eq:app_bethe_frozen}
    \begin{aligned}
    F_{\hat{p}}[q]=\sum_{t=1}^{T}\Big(&D_{\mathrm{KL}}\big[q_{\mathrm{obs},t}\,\|\,p(y_t\mid x_t,\theta_A^{(t)},\theta_c)\big]-\mathbb{E}_{q(\theta_A^{(t)})}\big[\log\mathrm{Dir}(\theta_A^{(t)}\mid\alpha_A)\big]\\
    &+D_{\mathrm{KL}}\big[q_{\mathrm{dyn},t}\,\|\,p(x_t\mid x_{t-1},u_t,\theta_B^{(t)},\theta_c)\big]-\mathbb{E}_{q(\theta_B^{(t)})}\big[\log\mathrm{Dir}(\theta_B^{(t)}\mid\alpha_B)\big]\\
    &-\mathbb{E}_q\big[\log p(u_t)\,\hat{p}_y(y_t)\,\hat{p}_x(x_t)\big]\Big)\\
    +2\sum_{t=1}^{T-1}&\mathbb{H}[q(x_t)]+\mathbb{H}[q(x_T)]+(2T-1)\,\mathbb{H}[q(\theta_c)]-\mathbb{E}_q[\log p(x_0)\,p(\theta_c)],
    \end{aligned}
\end{equation}
and the same degree count gives the exponents of the Bethe joint of Section~\ref{sec:planning_objective}. Under the planning convention that $p(\theta_c)$ is the current context belief, the $\theta_c$ terms are constants during planning.

\paragraph{(i) Base.} Substitute~\eqref{eq:factor_belief}, i.e.\ $\nu_t=\mathrm{Dir}(\alpha_A+e_{yx\theta_c})$, into Lemma~\ref{lem:app_elimination}: the divergence term vanishes and the first two terms of~\eqref{eq:app_bethe_frozen} equal $-\mathbb{H}[q_{yx,t}]-\mathbb{E}_{q_{yx,t}}[\log m_A]$; likewise on the dynamics side with $m_B$. Hence
\begin{equation}
    \label{eq:base_F}
    \begin{aligned}
    F_{\hat{p}}[q] = \sum_{t=1}^{T}\Big(
    &-\mathbb{H}[q_{yx,t}] - \mathbb{E}_{q_{yx,t}}[\log m_A]
    -\mathbb{H}[q_{\mathrm{trip},t}]\\
    &- \mathbb{E}_{q_{\mathrm{trip},t}}[\log m_B]
    -\mathbb{E}_q\big[\log p(u_t)\,\hat{p}_y(y_t)\,\hat{p}_x(x_t)\big]\Big)-\mathbb{E}_q[\log p(x_0)\,p(\theta_c)]\\
    &+2\textstyle\sum_{t=1}^{T-1}\mathbb{H}[q(x_t)]+\mathbb{H}[q(x_T)]+(2T-1)\,\mathbb{H}[q(\theta_c)].
    \end{aligned}
\end{equation}
Per entry the cancellation reads
\begin{equation}
    \label{eq:app_cancellation}
    \begin{aligned}
    -h\big[\mathrm{Dir}(\alpha_A+e_{yx\theta_c})\big]-\mathbb{E}_{\mathrm{Dir}(\alpha_A+e_{yx\theta_c})}\big[\log p(y\mid x,\theta_A,\theta_c)&+\log\mathrm{Dir}(\theta_A\mid\alpha_A)\big]\\
    &=-\log m_A(y\mid x,\theta_c),
    \end{aligned}
\end{equation}
which is Bayes' rule $\log\mathrm{Dir}(\alpha_A+e_{yx\theta_c})=\log\mathrm{Dir}(\alpha_A)+\log p-\log m_A$ integrated against itself. Because the copy's prior node carries the current belief $\mathrm{Dir}(\alpha_A)$, there is no prior-mismatch residual $\mathbb{E}[\log q(\theta_A)/p(\theta_A)]$ to dispose of.

\paragraph{(ii) Corrections.} Each copy lies in exactly one region of the Bethe joint, so the marginal of $q$ over $(y_t,x_t,\theta_c,\theta_A^{(t)})$ is $q_{\mathrm{obs},t}$ and over $(x_t,x_{t-1},u_t,\theta_c,\theta_B^{(t)})$ is $q_{\mathrm{dyn},t}$; on the chain the Bethe joint is a proper distribution with these marginals. Its $\theta_A,\theta_B$-free conditionals are the block conditionals, $q(y_t\mid x_t)=q_{yx,t}(y_t\mid x_t)$, $q(\theta_c\mid x_t,y_t)=q_{yx,t}(\theta_c\mid x_t,y_t)$, $q(x_t\mid x_{t-1},u_t)=q_{\mathrm{trip},t}(x_t\mid x_{t-1},u_t)$, $q(u_t\mid x_{t-1})=q_{\mathrm{trip},t}(u_t\mid x_{t-1})$; its $\theta_A$-conditionals are $q_{\mathrm{obs},t}(\theta_A\mid x_t,y_t,\theta_c)=\mathrm{Dir}(\alpha_A+e_{y_tx_t\theta_c})$ and the cavity~\eqref{eq:cavity}. Therefore $I_q(\theta_c;y_t\mid x_t)=I_{q_{yx,t}}(y_t;\theta_c\mid x_t)$, $I_q(\theta_A^{(t)};y_t\mid x_t,\theta_c)=\mathbb{E}_{q_{yx,t}}D_{\mathrm{KL}}[q_{\mathrm{obs},t}(\theta_A\mid x_t,y_t,\theta_c)\,\|\,q_{\mathrm{obs},t}(\theta_A\mid x_t,\theta_c)]=\mathbb{E}_{q_{yx,t}}[\Delta_{A,t}]$, likewise for $\theta_B$, and~\eqref{eq:deltaF} holds with every term read off $q_{yx,t}$ and $q_{\mathrm{trip},t}$. Nothing is double-counted with (i): the information-gain mass in the $\theta$-coupled energy is cancelled by~\eqref{eq:app_cancellation} and never reaches $\Delta F_t$.

\paragraph{(iii) Assembly.} Adding (i) and (ii) gives~\eqref{eq:F_T}, a functional of the discrete coordinates of Section~\ref{sec:family} and the fixed counts $\alpha_A,\alpha_B$ only. \qed

\begin{remark}[Limits]
\label{rem:app_limits}
At $q(\theta_A),q(\theta_B)=\delta_{\theta^*}$ every Dirichlet posterior and cavity is $\delta_{\theta^*}$, $\Delta_A=\Delta_B=0$, $m\to p(\cdot\mid\theta^*,\theta_c)$, and~\eqref{eq:F_T} is the fixed-parameter EFE-planning objective of~\cite{nuijten2026whattypeinference} with their $\theta$ read as $\theta_c$. Deleting the correction line leaves belief propagation on the posterior predictive likelihoods. Both limits are approached continuously in the concentrations.
\end{remark}

\section{Message-Passing Derivation}
\label{app:mp}

We derive the stationarity conditions of the planning objective~\eqref{eq:F_T} and solve them as message passing, following Appendix~D of~\cite{nuijten2026whattypeinference} with two structural differences: the Dirichlet copies are integrated out in closed form, so there are no $\theta_A,\theta_B$-messages, and the channels are conditionals of $q$, so there are no channel variables or constraints. The context $\theta_c$ is a discrete variable of the graph and is treated exactly as the shared $\theta$ of~\cite{nuijten2026whattypeinference}.

\subsection{Lagrangian}
\label{app:mp_lagrangian}

At $T{=}1$ the coordinates are $q_{yx}(y_1,x_1,\theta_c)$, $q_{\mathrm{trip}}(x_1,x_0,u_1,\theta_c)$ and the singletons $q_{x_0},q_{x_1},q_{y_1},q_{u_1},q_{\theta_c}$; the parameter beliefs $\mathrm{Dir}(\alpha_A),\mathrm{Dir}(\alpha_B)$ are fixed. With multipliers for normalization of the two blocks and for the seven factor-to-singleton consistency constraints,
\begin{equation}
    \label{eq:app_lagrangian}
    \begin{aligned}
    \mathcal{L}[q,\lambda]=F_1[q]
    &+\lambda_{yx}\Big(\textstyle\sum q_{yx}-1\Big)+\lambda_{\mathrm{trip}}\Big(\textstyle\sum q_{\mathrm{trip}}-1\Big)\\
    &+\textstyle\sum_{y_1}\lambda_{y_1}\Big(\sum_{x_1,\theta_c}q_{yx}-q_{y_1}\Big)
    +\sum_{x_1}\lambda^{\mathrm{obs}}_{x_1}\Big(\sum_{y_1,\theta_c}q_{yx}-q_{x_1}\Big)\\
    &+\textstyle\sum_{\theta_c}\lambda^{\mathrm{obs}}_{\theta_c}\Big(\sum_{y_1,x_1}q_{yx}-q_{\theta_c}\Big)
    +\sum_{x_1}\lambda^{\mathrm{dyn}}_{x_1}\Big(\sum_{x_0,u_1,\theta_c}q_{\mathrm{trip}}-q_{x_1}\Big)\\
    &+\textstyle\sum_{x_0}\lambda_{x_0}\Big(\sum_{x_1,u_1,\theta_c}q_{\mathrm{trip}}-q_{x_0}\Big)
    +\sum_{u_1}\lambda_{u_1}\Big(\sum_{x_1,x_0,\theta_c}q_{\mathrm{trip}}-q_{u_1}\Big)\\
    &+\textstyle\sum_{\theta_c}\lambda^{\mathrm{dyn}}_{\theta_c}\Big(\sum_{x_1,x_0,u_1}q_{\mathrm{trip}}-q_{\theta_c}\Big),
    \end{aligned}
\end{equation}
where each consistency multiplier is a function of its variable, $\lambda_{y_1}=\lambda_{y_1}(y_1)$ etc. Relative to~\cite{nuijten2026whattypeinference} (their (D.3)--(D.7)), the four channel multipliers are absent and the $\theta$-consistency multipliers refer to $\theta_c$ only. The corrections in $F_1$ are functionals of the coordinates through~\eqref{eq:factor_belief}--\eqref{eq:cavity}, cavities included.

\subsection{Stationarity Conditions}
\label{app:mp_stationarity}

Two lemmas make the variation of the corrections local.

\begin{lemma}[Score lemma]
\label{lem:score}
For a normalized joint $B[q](a,b)$ depending on a coordinate $q$, and any of its conditionals,
\begin{equation}
    \frac{\delta}{\delta q}\,\mathbb{H}\big[B(a\mid b)\big] = -\int \frac{\delta B(a,b)}{\delta q}\,\log B(a\mid b)\,da\,db,
\end{equation}
i.e.\ entropies of conditionals---and hence conditional mutual informations---differentiate as if the logarithm were frozen.
\end{lemma}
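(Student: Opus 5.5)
The plan is to write the conditional entropy as a joint entropy minus a marginal entropy and differentiate each piece with the chain rule; the normalization of $B$ then makes every term carrying a derivative of the logarithm drop out. Concretely, $\mathbb{H}[B(a\mid b)]=-\int B(a,b)\log B(a,b)\,da\,db+\int B(b)\log B(b)\,db$, with $B(b)=\int B(a,b)\,da$. The same bookkeeping will later give the mutual-information version.

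For the joint term, the product rule gives
\begin{equation*}
\frac{\delta}{\delta q}\Big(-\int B\log B\Big)=-\int\frac{\delta B}{\delta q}\log B-\int \frac{\delta B}{\delta q}.
\end{equation*}
The last integral is $\frac{\delta}{\delta q}\int B=0$, because $B[q]$ is normalized for every value of the coordinate. For the marginal term, the derivative is $\int\frac{\delta B(b)}{\delta q}\log B(b)+\int\frac{\delta B(b)}{\delta q}$. The second integral again vanishes by normalization. Since $\frac{\delta B(b)}{\delta q}=\int\frac{\delta B(a,b)}{\delta q}\,da$, the first integral equals $\int\frac{\delta B(a,b)}{\delta q}\log B(b)\,da\,db$. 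Adding the two pieces and combining the logarithms, $\log B(a,b)-\log B(b)=\log B(a\mid b)$, gives the claimed formula. The same argument works if $b$ is empty, which covers plain entropies.

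For the statement about mutual informations, I would write $I(a;c\mid b)=\mathbb{H}[B(a\mid b)]-\mathbb{H}[B(a\mid b,c)]$ and apply the result to each term. Each term is a conditional of the same normalized joint, marginalized over any extra variables as needed. Linearity then gives $\frac{\delta I}{\delta q}=\int\frac{\delta B}{\delta q}\log\frac{B(a\mid b,c)}{B(a\mid b)}$, which is a frozen-log pointwise score.

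The main point to check is the justification rather than the algebra. The derivative of $\int B$ must vanish, so normalization has to hold identically in $q$ and not only at one point. Under the Bethe joint this is true because the frozen graph is a chain, as noted in Appendix~\ref{app:objective_proof}(ii), and because the discrete blocks are normalized by construction. I also need to differentiate under the integral over the continuous $\theta_A$ axis. Here the $\theta_A$-dependence enters through finite mixtures of fixed Dirichlets, so dominated convergence should suffice, and I would state this assumption explicitly. Finally, points where $B(b)=0$ contribute nothing, under the convention $0\log0=0$.
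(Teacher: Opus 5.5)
Your proof is correct under the stated hypothesis. It is the same product-rule computation as the paper's, but you organize it differently, and the difference is worth noticing.

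You split $\mathbb{H}[B(a\mid b)]$ into a joint entropy minus a marginal entropy. You then kill the two leftover constants, $-\int\frac{\delta B(a,b)}{\delta q}$ and $+\int\frac{\delta B(b)}{\delta q}$, \emph{separately} by normalization of $B$. The paper instead differentiates $\log\frac{B(a,b)}{B(b)}$ in one piece and notes that these same two terms cancel \emph{each other}. The reason is $\int_a B(a,b)\,da=B(b)$ for every $q$, which is exactly the identity you already invoke to rewrite the $\log B(b)$ term.

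The paper's route therefore needs no normalization of the joint at all. Only the conditional $B(a\mid b)$ must be normalized, and that is automatic. This makes the last paragraph of your proposal unnecessary.

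The difference is not cosmetic. In Propositions~\ref{prop:app_obs}--\ref{prop:app_dyn} the lemma is applied to entrywise variations of $q_{yx}$ and $q_{\mathrm{trip}}$ in the Lagrangian. There normalization is enforced only through $\lambda_{yx},\lambda_{\mathrm{trip}}$, so $\sum q_{yx}$ is \emph{not} constant along the variation. Your justification via the chain structure and ``normalized by construction'' would not cover that use, whereas the cancellation does.

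Normalization genuinely matters only in your aside on $b=\emptyset$. For $-\int B\log B$ the constant $-\int\frac{\delta B}{\delta q}$ has no partner, so there you do need $\int B\equiv 1$, or you can write $\log(B/Z)$ with $Z=\int B$.

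Your mutual-information extension is fine. It reproduces the salience score $-2\log[q(\theta_c\mid x_1,y_1)/q(\theta_c\mid x_1)]$ used in Proposition~\ref{prop:app_obs}.
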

\begin{proof}
$\mathbb{H}[B(a\mid b)]=-\int B(a,b)\log\frac{B(a,b)}{B(b)}$. The product-rule remainder is $-\int B(a,b)\big[\frac{\delta B(a,b)/\delta q}{B(a,b)}-\frac{\delta B(b)/\delta q}{B(b)}\big]=-\int\frac{\delta B(a,b)}{\delta q}+\int\frac{\delta B(b)}{\delta q}=0$, since $\int_a B(a,b)=B(b)$ for every $q$. \qed
\end{proof}

\begin{lemma}[Cavity back-reaction vanishes]
\label{lem:cavity}
With $\beta(\theta\mid x,y,\theta_c):=q_{\mathrm{obs},t}(\theta_A\mid x,y,\theta_c)$ independent of $q_{yx,t}$ and the exact cavity $\bar\beta(\theta\mid x,\theta_c)=\sum_y q_{yx,t}(y\mid x,\theta_c)\,\beta(\theta\mid x,y,\theta_c)$,
\begin{equation}
    \frac{\delta\,\mathcal{I}_{A,t}}{\delta q_{yx,t}(y,x,\theta_c)}=\Delta_{A,t}(x,y,\theta_c)=D_{\mathrm{KL}}\big[\beta(\cdot\mid x,y,\theta_c)\,\big\|\,\bar\beta(\cdot\mid x,\theta_c)\big],
\end{equation}
with no contribution from the dependence of $\bar\beta$ on $q_{yx,t}$. Likewise $\delta\mathcal{I}_{B,t}/\delta q_{\mathrm{trip},t}=\Delta_{B,t}$.
\end{lemma}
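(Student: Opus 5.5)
We need to prove Lemma~\ref{lem:cavity}: the derivative of $\mathcal{I}_{A,t}=\sum_{y,x,\theta_c}q_{yx,t}(y,x,\theta_c)\,D_{\mathrm{KL}}[\beta(\cdot\mid x,y,\theta_c)\,\|\,\bar\beta(\cdot\mid x,\theta_c)]$ with respect to $q_{yx,t}(y,x,\theta_c)$. The plan is to view $\mathcal{I}_{A,t}$ as the conditional mutual information $I(\theta_A;y\mid x,\theta_c)$ of the joint
\begin{equation*}
B(\theta_A,y,x,\theta_c)=q_{yx,t}(y,x,\theta_c)\,\beta(\theta_A\mid x,y,\theta_c).
\end{equation*}
Since $\beta$ does not depend on $q_{yx,t}$, $B$ is linear in the coordinate. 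Its marginal conditional $B(\theta_A\mid x,\theta_c)$ is exactly the exact cavity $\bar\beta$.

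Write $\mathcal{I}_{A,t}=\mathbb{H}[B(\theta_A\mid x,\theta_c)]-\mathbb{H}[B(\theta_A\mid y,x,\theta_c)]$ and apply Lemma~\ref{lem:score} to each entropy. Because $\delta B/\delta q_{yx,t}(y,x,\theta_c)=\beta(\theta_A\mid x,y,\theta_c)$ (a point mass in the discrete indices), the second term differentiates to $\int\beta\log\beta\,d\theta_A$. The first term differentiates to $-\int\beta\log\bar\beta\,d\theta_A$. The difference is $\int\beta\log(\beta/\bar\beta)=\Delta_{A,t}(x,y,\theta_c)$. The dependence of $\bar\beta$ on $q_{yx,t}$ through $q_{yx,t}(y\mid x,\theta_c)$ is precisely the product-rule remainder that Lemma~\ref{lem:score} shows to vanish, because $\int_{\theta_A}\delta\bar\beta=0$.

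As a cross-check, I would also verify this directly. Differentiating $\sum_y r_y D_{\mathrm{KL}}[\beta_y\|\bar\beta]$ in $\bar\beta$ produces $-\int\sum_y r_y\beta_y\,\delta\bar\beta/\bar\beta=-\int\delta\bar\beta=0$, since $\sum_y r_y\beta_y=\bar\beta$ and $\bar\beta$ stays normalized. This is just the Gibbs-minimum property of the mixture.

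The step I expect to be delicate is reconciling this with the projected cavity $\tilde q=\mathrm{Dir}(\alpha_A+\bar n)$ actually used in~\eqref{eq:tilts}. For the projection, the envelope argument is not exact: $\tilde q$ minimizes the reverse divergence $\sum_y r_yD_{\mathrm{KL}}[\tilde q\|p_y]$ (Appendix~\ref{app:cavity}(ii)), not the forward one. I would therefore state the lemma for the exact mixture $\bar\beta$, as written. Then I would bound the residual back-reaction of the projection by the second-moment gap of Appendix~\ref{app:cavity}(iv), which is $O(1/\alpha_{A,0})$, and treat it as the quantified side effect. The same argument on the triplet block, with columns indexed by $(x_{t-1},u_t,\theta_c)$ and datum $x_t$, gives $\delta\mathcal{I}_{B,t}/\delta q_{\mathrm{trip},t}=\Delta_{B,t}$.
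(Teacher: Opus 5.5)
Your argument is correct, and your cross-check is exactly the paper's proof. The paper writes $\mathcal{I}_{A,t}$ as $\sum q_{yx,t}\int\beta\log\beta-\sum q_{yx,t}\int\beta\log\bar\beta$ and takes the explicit derivative, which is $\Delta_{A,t}$. It then uses $\sum_{y'}q_{yx,t}(y',x,\theta_c)\,\beta(\cdot\mid x,y',\theta_c)=q(x,\theta_c)\,\bar\beta(\cdot\mid x,\theta_c)$ to rewrite the implicit derivative as $-q(x,\theta_c)\,\delta\!\int\bar\beta\,d\theta=0$.

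Your primary route packages the same mechanism differently. You identify $\mathcal{I}_{A,t}$ as a conditional mutual information of the joint $q_{yx,t}\,\beta$, whose conditional given $(x,\theta_c)$ is the cavity. You then apply Lemma~\ref{lem:score} to the two conditional entropies; for the cavity term you apply it to the marginal $q(x,\theta_c)\,\bar\beta$ obtained by summing out $y$. This makes the statement a corollary of the score lemma. It also shows that novelty is differentiated by the same frozen-logarithm rule the paper uses for $\mathbb{H}[y\mid x]$ and $\mathcal{I}_c$ in Proposition~\ref{prop:app_obs}. The direct computation, in exchange, isolates the single fact being used: the density weighting $\log\bar\beta$ is $\bar\beta$ itself. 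That is exactly what breaks under the projection.

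There are two small corrections at the periphery.

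\begin{itemize}
\item \textbf{Projected cavity.} The residual is not governed by the second-moment gap of Appendix~\ref{app:cavity}(iv). Since $\partial_{\bar n_k}\log\mathrm{Dir}(\theta\mid\alpha+\bar n)=\log\theta_k-\psi(\alpha_k+\bar n_k)+\psi(\alpha_0+1)$, the leftover implicit term is proportional to the mismatch of the Dirichlet sufficient statistic. With $r_k=\bar n_k$ this is $\mathbb{E}_{\bar\beta}[\log\theta_k]-\mathbb{E}_{\mathrm{Dir}(\alpha+\bar n)}[\log\theta_k]=(1-r_k)\psi(\alpha_k)+r_k\psi(\alpha_k+1)-\psi(\alpha_k+r_k)$. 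The paper bounds it as $O(\alpha_k^{-2})$ via concavity of $\psi$. This is the quantitative form of your envelope remark: the mismatch vanishing is exactly the M-projection condition.
\item \textbf{The $\theta_B$ statement.} Here $\mathcal{I}_{B,t}=I(\theta_B;x_t,u_t\mid x_{t-1},\theta_c)$ has datum $(x_t,u_t)$ and cavity $q_{\mathrm{dyn},t}(\theta_B\mid x_{t-1},\theta_c)$. That cavity is a mixture over $(x_t,u_t)$ spread across the $u$-columns. Your argument applies verbatim with this assignment. Taking the datum to be $x_t$ alone and conditioning on $u_t$ would instead differentiate a different gain, with a different cavity.
\end{itemize}
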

\begin{proof}
Write
\begin{equation*}
    \begin{aligned}
    \mathcal{I}_{A,t}=&\sum_{y,x,\theta_c}q_{yx,t}(y,x,\theta_c)\int\beta(\theta\mid x,y,\theta_c)\log\beta(\theta\mid x,y,\theta_c)\,d\theta\\
    &-\sum_{y,x,\theta_c}q_{yx,t}(y,x,\theta_c)\int\beta(\theta\mid x,y,\theta_c)\log\bar\beta(\theta\mid x,\theta_c)\,d\theta.
    \end{aligned}
\end{equation*}
The explicit derivative is $\Delta_{A,t}(x,y,\theta_c)$. Since $\sum_{y'}q_{yx,t}(y',x,\theta_c)\,\beta(\theta\mid x,y',\theta_c)$ equals $q(x,\theta_c)\,\bar\beta(\theta\mid x,\theta_c)$, the implicit derivative through $\bar\beta(\cdot\mid x,\theta_c)$ is
\begin{equation*}
    \begin{aligned}
    -q(x,\theta_c)\int\bar\beta(\theta\mid x,\theta_c)\,&\frac{\delta\log\bar\beta(\theta\mid x,\theta_c)}{\delta q_{yx,t}(y,x,\theta_c)}\,d\theta\\
    &=-q(x,\theta_c)\,\frac{\delta}{\delta q_{yx,t}(y,x,\theta_c)}\int\bar\beta(\theta\mid x,\theta_c)\,d\theta=0,
    \end{aligned}
\end{equation*}
because $\bar\beta$ is normalized for every $q$. \qed
\end{proof}

Under the cavity projection of Section~\ref{sec:family} the weight in the integral is the exact mixture while the logarithm is that of $\mathrm{Dir}(\alpha+\bar{n}_{x\theta_c})$, whose $\bar{n}$-derivative is $\log\theta_k-\psi(\alpha_k+\bar{n}_k)+\psi(\alpha_0+1)$; the residual is therefore proportional to $\mathbb{E}_{\bar\beta}[\log\theta_k]-\mathbb{E}_{\mathrm{Dir}(\alpha+\bar{n})}[\log\theta_k]=(1-r_k)\psi(\alpha_k)+r_k\psi(\alpha_k+1)-\psi(\alpha_k+r_k)=O(\alpha_k^{-2})$ by concavity of $\psi$, and is dropped. Only explicit appearances of $q$ survive, which is why the updates stay local and the mixture structure of the cavity never enters the messages.

\begin{proposition}[Observation block]
\label{prop:app_obs}
Varying~\eqref{eq:app_lagrangian} in $q_{yx}(y_1,x_1,\theta_c)$: the base~\eqref{eq:base_F} gives $\log q_{yx}+1-\log m_A$; $+\mathbb{H}_{q_{yx}}[y_1\mid x_1]$ gives $-\log q(y_1\mid x_1)$ and $-2\mathcal{I}_c$ gives $-2\log[q(\theta_c\mid x_1,y_1)/q(\theta_c\mid x_1)]$ (Lemma~\ref{lem:score}); $-2\mathcal{I}_{A}$ gives $-2\Delta_A(x_1,y_1,\theta_c)$ (Lemma~\ref{lem:cavity}); the multipliers give $\lambda_{yx}+\lambda_{y_1}+\lambda^{\mathrm{obs}}_{x_1}+\lambda^{\mathrm{obs}}_{\theta_c}$. Using $q(\theta_c\mid x,y)/q(\theta_c\mid x)=q(y\mid x,\theta_c)/q(y\mid x)$, at stationarity
\begin{equation}
    \label{eq:app_obs_stationary}
    q_{yx}^*(y_1,x_1,\theta_c) \propto \underbrace{m_A(y_1\mid x_1,\theta_c)}_{\text{predictive}}\;\underbrace{\frac{q^*(y_1\mid x_1,\theta_c)^2}{q^*(y_1\mid x_1)}}_{\text{salience}}\;\underbrace{e^{\,2\,\Delta_A(x_1,y_1,\theta_c)}}_{\text{novelty}\,\ge\,1}\;e^{-\lambda^{\mathrm{obs}}},
\end{equation}
with $\lambda^{\mathrm{obs}}:=\lambda_{y_1}+\lambda_{x_1}^{\mathrm{obs}}+\lambda_{\theta_c}^{\mathrm{obs}}$.
\end{proposition}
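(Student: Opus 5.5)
The proof is a direct variation of the Lagrangian~\eqref{eq:app_lagrangian} in the coordinate $q_{yx}(y_1,x_1,\theta_c)$, taking the terms of $F_1$ from~\eqref{eq:F_T} one at a time. \emph{Base.} From~\eqref{eq:base_F}, only $-\mathbb{H}[q_{yx}]-\mathbb{E}_{q_{yx}}[\log m_A]$ and the preference energy $-\mathbb{E}[\log\hat{p}_y]$ contain $q_{yx}$; the latter is absorbed into the singleton multiplier $\lambda_{y_1}$, or is constant for uniform $\hat{p}_y$. Their derivative is $\log q_{yx}+1-\log m_A$. The singleton entropies are functions of the singletons only, so they do not enter this variation.

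\emph{Corrections.} For the ambiguity term $\mathbb{H}_{q_{yx}}[y_1\mid x_1]$ I would apply Lemma~\ref{lem:score} with $B=q_{yx}$, $a=y_1$, $b=x_1$. Since $\delta q_{yx}/\delta q_{yx}(y_1,x_1,\theta_c)$ is a point indicator, the derivative is $-\log q(y_1\mid x_1)$. For the salience I would write $I(y;\theta_c\mid x)=\mathbb{H}[\theta_c\mid x]-\mathbb{H}[\theta_c\mid x,y]$ and apply Lemma~\ref{lem:score} to each entropy. This gives $\log q(\theta_c\mid x_1,y_1)-\log q(\theta_c\mid x_1)$, so $-2\mathcal{I}_c$ contributes $-2\log[q(\theta_c\mid x_1,y_1)/q(\theta_c\mid x_1)]$. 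For the novelty, Lemma~\ref{lem:cavity} gives $-2\Delta_A(x_1,y_1,\theta_c)$ directly. Under the projected cavity this requires dropping the $O(\alpha_k^{-2})$ residual noted after that lemma, which I would state explicitly. The dynamics-side corrections, $\mathcal{I}_B$ and the two $q_{\mathrm{trip}}$ entropies, do not depend on $q_{yx}$, because $\Delta_B$ and the cavity on the $\theta_B$ side are built from $q_{\mathrm{trip}}$ alone. \emph{Multipliers.} These contribute $\lambda_{yx}+\lambda_{y_1}+\lambda^{\mathrm{obs}}_{x_1}+\lambda^{\mathrm{obs}}_{\theta_c}$.

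\emph{Solving.} Setting the sum to zero and exponentiating gives $q_{yx}\propto m_A\,q(y_1\mid x_1)\,[q(\theta_c\mid x_1,y_1)/q(\theta_c\mid x_1)]^2\,e^{2\Delta_A}e^{-\lambda^{\mathrm{obs}}}$. Here the constant $1+\lambda_{yx}$ goes into the proportionality, and the ambiguity contribution enters with a plus sign after moving it across. Bayes' rule $q(\theta_c\mid x,y)/q(\theta_c\mid x)=q(y\mid x,\theta_c)/q(y\mid x)$ then turns the squared ratio into $q(y\mid x,\theta_c)^2/q(y\mid x)^2$, and combining with the factor $q(y\mid x)$ yields $q(y\mid x,\theta_c)^2/q(y\mid x)$, which is~\eqref{eq:app_obs_stationary}. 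Since $\Delta_A\ge0$ is a KL divergence, $e^{2\Delta_A}\ge1$.

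The main obstacle is the sign and power bookkeeping in this last step, where the ambiguity and salience contributions combine into $r^2/r$. Getting it right requires the Score lemma to be valid when the channels $q(\cdot\mid\cdot)$ are themselves functions of the coordinate, which is exactly the content of Lemma~\ref{lem:score}. A secondary point to justify is that the cavity's implicit dependence on $q_{yx}$ contributes nothing, which follows from Lemma~\ref{lem:cavity} up to the projection residual.
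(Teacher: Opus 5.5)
Your proposal is correct and follows the paper's own derivation term by term: the base gives $\log q_{yx}+1-\log m_A$; Lemma~\ref{lem:score} handles the ambiguity and salience terms; Lemma~\ref{lem:cavity}, with the projection residual dropped as you note, handles the novelty term; and exponentiating with Bayes' rule combines $q(y_1\mid x_1)\,[q(\theta_c\mid x_1,y_1)/q(\theta_c\mid x_1)]^2$ into $q(y_1\mid x_1,\theta_c)^2/q(y_1\mid x_1)$. The one imprecision is your statement that $-\mathbb{E}[\log\hat{p}_y]$ contains $q_{yx}$: in~\eqref{eq:base_F} this term sits on the singleton $q(y_1)$, so it never enters the block variation and reaches~\eqref{eq:app_obs_stationary} only through the singleton condition $\lambda_{y_1}=-\log\hat{p}_y$.
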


\begin{proposition}[Dynamics block]
\label{prop:app_dyn}
Varying in $q_{\mathrm{trip}}(x_1,x_0,u_1,\theta_c)$: the base gives $\log q_{\mathrm{trip}}+1-\log m_B$; $-\mathbb{H}_q[x_1\mid x_0,u_1]$ gives $+\log q(x_1\mid x_0,u_1)$; $+\mathbb{H}_q[u_1\mid x_0]$ gives $-\log q(u_1\mid x_0)$; $-\mathcal{I}_B$ gives $-\Delta_B$. At stationarity,
\begin{equation}
    \label{eq:app_dyn_stationary}
    q_{\mathrm{trip}}^*(x_1,x_0,u_1,\theta_c) \propto m_B(x_1\mid x_0,u_1,\theta_c)\;\frac{q^*(u_1\mid x_0)}{q^*(x_1\mid x_0,u_1)}\;e^{\,\Delta_B(x_1,x_0,u_1,\theta_c)}\;e^{-\lambda^{\mathrm{dyn}}},
\end{equation}
with $\lambda^{\mathrm{dyn}}:=\lambda_{x_1}^{\mathrm{dyn}}+\lambda_{x_0}+\lambda_{u_1}+\lambda_{\theta_c}^{\mathrm{dyn}}$.
\end{proposition}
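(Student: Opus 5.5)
The plan mirrors Proposition~\ref{prop:app_obs}: vary the Lagrangian~\eqref{eq:app_lagrangian} in a single entry $q_{\mathrm{trip}}(x_1,x_0,u_1,\theta_c)$, collect each term of $F_1$ from~\eqref{eq:F_T} that depends on this block, set the derivative to zero, and exponentiate.

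\textbf{Base free energy.} By Proposition~\ref{prop:objective}(i), only two terms of~\eqref{eq:base_F} involve this block: $-\mathbb{H}[q_{\mathrm{trip}}]$ and $-\mathbb{E}_{q_{\mathrm{trip}}}[\log m_B\,p(u_1)\,\hat{p}_x(x_1)]$. Their derivative is $\log q_{\mathrm{trip}}+1-\log m_B$. The terms $p(u_1)$ and $\hat p_x(x_1)$ are functions of single singleton variables, so I absorb them into the singleton multipliers $\lambda_{u_1},\lambda^{\mathrm{dyn}}_{x_1}$ (equivalently, into $\lambda^{\mathrm{dyn}}$). This is consistent with the statement, which keeps only $m_B$ explicit.

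\textbf{Entropy corrections.} These are the terms $-\mathbb{H}_{q_{\mathrm{trip}}}[x_1\mid x_0,u_1]$ and $+\mathbb{H}_{q_{\mathrm{trip}}}[u_1\mid x_0]$. Both are conditional entropies of marginals of the normalized joint $q_{\mathrm{trip}}$: $q(x_1,x_0,u_1)$ and $q(u_1,x_0)$, obtained by summing over the remaining variables. I apply Lemma~\ref{lem:score} to each marginal. The derivative of a marginal with respect to the entry is an indicator, so the frozen-log rule gives $+\log q(x_1\mid x_0,u_1)$ and $-\log q(u_1\mid x_0)$ respectively, evaluated at the entry's own indices.

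\textbf{Information gain.} The term $-\mathcal{I}_{B}=-\mathbb{E}_{q_{\mathrm{trip}}}[\Delta_B]$ is handled by Lemma~\ref{lem:cavity} in its dynamics form. The posterior $\mathrm{Dir}(\alpha_B+e_{x_1x_0u_1\theta_c})$ does not depend on $q_{\mathrm{trip}}$, and the cavity back-reaction vanishes by normalization, so the derivative is exactly $-\Delta_B(x_1,x_0,u_1,\theta_c)$. Under the Dirichlet projection a residual of order $O(\alpha^{-2})$ remains; as in the remark after that lemma, I drop it.

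\textbf{Multipliers and assembly.} The multipliers contribute $\lambda_{\mathrm{trip}}+\lambda^{\mathrm{dyn}}_{x_1}+\lambda_{x_0}+\lambda_{u_1}+\lambda^{\mathrm{dyn}}_{\theta_c}$. Summing all contributions, setting the result to zero and exponentiating gives~\eqref{eq:app_dyn_stationary}. The constant $1+\lambda_{\mathrm{trip}}$ goes into the proportionality, and the remaining multipliers form $e^{-\lambda^{\mathrm{dyn}}}$.

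The main subtlety is the treatment of the conditionals $q^*(\cdot\mid\cdot)$ on the right-hand side. Since they depend on $q_{\mathrm{trip}}$ itself, this is a fixed-point characterization rather than an explicit solution. The justification is Lemma~\ref{lem:score}: it guarantees that no extra terms arise from differentiating inside the logarithms, so reading them as the lagged channels $r_t$ is legitimate.
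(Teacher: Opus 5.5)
Your proposal is correct and follows the paper's own argument: term-by-term variation of the Lagrangian, with Lemma~\ref{lem:score} for the two conditional entropies, Lemma~\ref{lem:cavity} (projection residual dropped) for $-\mathcal{I}_B$, and the multipliers collected into $e^{-\lambda^{\mathrm{dyn}}}$. The only difference is bookkeeping. The paper keeps $p(u_1)$ and $\hat{p}_x$ on the singleton regions as in~\eqref{eq:app_bethe_frozen}, so the base derivative is literally $\log q_{\mathrm{trip}}+1-\log m_B$ and these factors enter through $\lambda_{u_1}=-\log p(u_1)$ and the $x_1$-multiplier sum in~\eqref{eq:app_multipliers}; you instead place them in the block, as in~\eqref{eq:F_T}, and absorb them by shifting the multipliers, which is equivalent provided you do not also count them on the singletons.
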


Propositions~\ref{prop:app_obs}--\ref{prop:app_dyn} are Proposition~\ref{prop:kernels}: the factor functions (13a)--(13b) of~\cite{nuijten2026whattypeinference} with the point likelihood replaced by the posterior predictive likelihood and the information-gain weights appended. Their observation factor function carries the ratio $q(y\mid x,\theta)^2/q(y\mid x)$ for the full $\theta$; with $\theta=(\theta_A,\theta_c)$ it factorizes as
\begin{equation}
    \label{eq:app_channel_expectation}
    \begin{gathered}
    \frac{q(y\mid x,\theta_A,\theta_c)^2}{q(y\mid x)}
    =\frac{q(y\mid x,\theta_c)^2}{q(y\mid x)}\cdot\Big[\frac{q(y\mid x,\theta_A,\theta_c)}{q(y\mid x,\theta_c)}\Big]^2,\\
    \mathbb{E}_{q_{\mathrm{obs},t}(\theta_A\mid x,y,\theta_c)}\log\frac{q(y\mid x,\theta_A,\theta_c)}{q(y\mid x,\theta_c)}=\Delta_{A,t}(x,y,\theta_c),
    \end{gathered}
\end{equation}
since the second ratio equals $q_{\mathrm{obs},t}(\theta_A\mid x,y,\theta_c)/q_{\mathrm{obs},t}(\theta_A\mid x,\theta_c)$ by Bayes' rule. The finite factor is stored on the $\theta_c$-axis and retains its channel form; the $\theta_A$-dependent factor, not polynomial in $\theta_A$, is replaced by its posterior expectation, a scalar per entry, as conjugacy requires (Appendix~\ref{app:family}). Because each correction is differentiated under the measure it is defined on, the weights have the form $+D_{\mathrm{KL}}\ge0$.

\paragraph{Singleton beliefs.} $\Delta F$ has no singleton dependence, so as in~\cite{nuijten2026whattypeinference} (D.4.6): for the degree-two variables the singleton entropy cancels against the counting term, and for the shared variables only the sum of multipliers is fixed,
\begin{equation}
    \label{eq:app_multipliers}
    \begin{gathered}
    \lambda_{y_1} = -\log\hat{p}_y, \qquad \lambda_{x_0} = -\log p(x_0), \qquad \lambda_{u_1} = -\log p(u_1),\\
    \lambda_{x_1}^{\mathrm{obs}} + \lambda_{x_1}^{\mathrm{dyn}} = -\log q_{x_1}^* - 1 - \log\hat{p}_x,\\
    \lambda_{\theta_c}^{\mathrm{obs}} + \lambda_{\theta_c}^{\mathrm{dyn}} = -\log q_{\theta_c}^* - 1 - \log p(\theta_c).
    \end{gathered}
\end{equation}

\begin{remark}[Parameter beliefs and the frozen-parameter update]
\label{rem:app_theta}
During planning $q(\theta_A),q(\theta_B)$ are frozen, so $m$, the one-count posteriors and the weights depend on them only through the fixed concentrations, whereas $q(\theta_c)$ is an ordinary singleton belief and moves within the sweep. Exact functional stationarity of $F_T$ in a free $q_{\theta_A}$ has two parts: the conjugate statistic (a Dirichlet count update) and a \emph{centered score} $s_A(\theta_A)=\mathbb{E}_{w}[\log\beta-\log\bar\beta-\Delta_A]$ under the reweighted block measure $w\propto q_{yx,t}\,p(y\mid x,\theta_A,\theta_c)/m_A$, which integrates to zero against $q_{\theta_A}$ and vanishes at point mass and as counts grow. The score would reshape $q_{\theta_A}$ toward parameter values of above-average posterior distinguishability without shifting count mass. Between environment steps we drop it and update the concentrations by the plain conjugate count update from the realized transition and observation,
\begin{equation}
    \label{eq:app_theta_T}
    \begin{gathered}
    q_{\theta_A} \propto p(\theta_A)\,\exp\Big(\textstyle\sum_{t}\mathbb{E}_{q_{yx,t}}[\log p(y_t\mid x_t,\theta_A,\theta_c)]\Big),\\
    q_{\theta_B} \propto p(\theta_B)\,\exp\Big(\textstyle\sum_{t}\mathbb{E}_{q_{\mathrm{trip},t}}[\log p(x_t\mid x_{t-1},u_t,\theta_B,\theta_c)]\Big),
    \end{gathered}
\end{equation}
with $q_{yx,t},q_{\mathrm{trip},t}$ now the (soft) realized data, the count distributed over contexts according to $q(\theta_c)$, which is itself updated by exact Bayes. This is a partial-gradient step of $F_T$ that preserves exact conjugacy, with the dropped terms exhibited; the score is worth including only if monitoring shows the plain step increasing $F_T$.
\end{remark}

\subsection{Solving the Stationarity Equations as Message Passing}
\label{app:mp_updates}

Given the factor functions~\eqref{eq:kernel_obs}--\eqref{eq:kernel_dyn} with lagged conditionals $r_t$, Propositions~\ref{prop:app_obs}--\ref{prop:app_dyn} are an exact Bethe belief-propagation problem on the discrete graph over $\{y_t,x_t,u_t,\theta_c\}$~\cite{senoz2021variational}, and the standard factor-to-variable rule applies,
\begin{equation}
    \label{eq:app_msg_rule}
    \mu_{a\to j}(s_j)\propto\sum_{s_{a\setminus j}}\bar{g}_a(s_a)\prod_{i\in\mathcal{E}(a)\setminus j}\mu_{i\to a}(s_i).
\end{equation}
Observation factor, writing $\mu_{y_t},\mu_{x_t},\mu_{\theta_c}$ for the messages into $f_{\mathrm{obs},t}$ (with $\mu_{y_t}=\hat{p}_y$):
\begin{equation}
    \label{eq:app_msg_obs}
    \begin{gathered}
    \mu_{\mathrm{obs}_t\to x_t}=\sum_{y_t,\theta_c}\bar{g}_{\mathrm{obs},t}\,\mu_{y_t}\mu_{\theta_c},\qquad
    \mu_{\mathrm{obs}_t\to y_t}=\sum_{x_t,\theta_c}\bar{g}_{\mathrm{obs},t}\,\mu_{x_t}\mu_{\theta_c},\\
    \mu_{\mathrm{obs}_t\to\theta_c}=\sum_{y_t,x_t}\bar{g}_{\mathrm{obs},t}\,\mu_{y_t}\mu_{x_t}.
    \end{gathered}
\end{equation}
Dynamics factor: analogously to $x_t$, $x_{t-1}$, $u_t$, $\theta_c$, each summing $\bar{g}_{\mathrm{dyn},t}$ against the other three incoming messages, with $\mu_{u_t\to\mathrm{dyn}_t}=p(u_t)$ and boundary $\mu_{x_0\to\mathrm{dyn}_1}=p(x_0)$. There are no messages to or from $\theta_A^{(t)},\theta_B^{(t)}$. Block and singleton beliefs are
\begin{equation}
    \label{eq:app_block_beliefs}
    \begin{gathered}
    q_{yx,t}\propto\bar{g}_{\mathrm{obs},t}\,\mu_{y_t\to\mathrm{obs}_t}\,\mu_{x_t\to\mathrm{obs}_t}\,\mu_{\theta_c\to\mathrm{obs}_t},\\
    q_{\mathrm{trip},t}\propto\bar{g}_{\mathrm{dyn},t}\,\mu_{x_t\to\mathrm{dyn}_t}\,\mu_{x_{t-1}\to\mathrm{dyn}_t}\,\mu_{u_t\to\mathrm{dyn}_t}\,\mu_{\theta_c\to\mathrm{dyn}_t},
    \end{gathered}
\end{equation}
\begin{equation}
    \label{eq:app_singleton_beliefs}
    \begin{gathered}
    q(x_t)\propto\hat{p}_x\,\mu_{\mathrm{obs}_t\to x_t}\,\mu_{\mathrm{dyn}_t\to x_t}\,\mu_{\mathrm{dyn}_{t+1}\to x_t},\qquad
    q(\theta_c)\propto p(\theta_c)\prod_{\tau=1}^{T}\mu_{\mathrm{obs}_\tau\to\theta_c}\,\mu_{\mathrm{dyn}_\tau\to\theta_c},\\
    q(u_t)\propto p(u_t)\,\mu_{\mathrm{dyn}_t\to u_t},\qquad
    q(y_t)\propto\hat{p}_y\,\mu_{\mathrm{obs}_t\to y_t},
    \end{gathered}
\end{equation}
which at a fixed point recover~\eqref{eq:app_obs_stationary}--\eqref{eq:app_dyn_stationary} with~\eqref{eq:app_multipliers}. The full factor beliefs follow from~\eqref{eq:factor_belief}, $q_{\mathrm{obs},t}=q_{yx,t}\,\mathrm{Dir}(\alpha_A+e_{y_tx_t\theta_c})$, and are never stored. The block-layer cost is independent of the size of the Dirichlet parameter space and linear in $K$.

\paragraph{Refresh.} After each sweep, from the new block beliefs,
\begin{equation}
    \label{eq:app_refresh}
    \begin{gathered}
    r_t(y_t\mid x_t,\theta_c)\leftarrow q_{yx,t}(y_t\mid x_t,\theta_c),\qquad
    r_t(y_t\mid x_t)\leftarrow q_{yx,t}(y_t\mid x_t),\\
    r_t(u_t\mid x_{t-1})\leftarrow q_{\mathrm{trip},t}(u_t\mid x_{t-1}),\qquad
    r_t(x_t\mid x_{t-1},u_t)\leftarrow q_{\mathrm{trip},t}(x_t\mid x_{t-1},u_t),
    \end{gathered}
\end{equation}
\begin{equation}
    \label{eq:app_refresh_tilts}
    \bar{n}_{x_t\theta_c}\leftarrow r_t(\cdot\mid x_t,\theta_c),\quad
    \Delta_{A,t}(x_t,y_t,\theta_c)\leftarrow D_{\mathrm{KL}}\big[\mathrm{Dir}(\alpha_A+e_{y_tx_t\theta_c})\,\big\|\,\mathrm{Dir}(\alpha_A+\bar{n}_{x_t\theta_c})\big],
\end{equation}
and likewise $\Delta_{B,t}$ with fractional counts $\bar{n}_{x_{t-1}\theta_c}(x_t,u_t)=q_{\mathrm{trip},t}(x_t,u_t\mid x_{t-1},\theta_c)$ spread over the $(x_{t-1},u_t,\theta_c)$ columns, so that the divergence carries cross-column terms. The factor functions are rebuilt immediately. Damping is applied linearly in $\bar{n}$, which keeps the cavity Dirichlet. Since no conditional is built from messages and the one-count posteriors never feed back into themselves, the divergent channel recursion of~\cite{nuijten2026whattypeinference} is structurally absent.

\paragraph{Closed forms.} With $q_{\theta_A}=\prod_{x,\theta_c}\mathrm{Dir}(\alpha_A(\cdot\mid x,\theta_c))$: $m_A(y\mid x,\theta_c)=\alpha_A(y\mid x,\theta_c)/\alpha_{A,0}(x,\theta_c)$; $q_{\mathrm{obs},t}(\theta_A\mid x,y,\theta_c)=\mathrm{Dir}(\alpha_A+e_{yx\theta_c})$; the projected cavity is $\mathrm{Dir}(\alpha_A+\bar{n}_{x\theta_c})$; and for two Dirichlets on one column,
\begin{equation}
    \label{eq:app_dir_kl}
    D_{\mathrm{KL}}[\mathrm{Dir}(a)\,\|\,\mathrm{Dir}(b)]=\log\frac{\Gamma(a_0)}{\Gamma(b_0)}+\sum_k\Big[\log\frac{\Gamma(b_k)}{\Gamma(a_k)}+(a_k-b_k)\big(\psi(a_k)-\psi(a_0)\big)\Big].
\end{equation}
The salience $I_{q_{yx,t}}(y_t;\theta_c\mid x_t)$ is a finite sum over the stored array. $F_T$~\eqref{eq:F_T} is assembled from quantities the sweep already computes and gives a stopping criterion on $|\Delta F_T|$.

\paragraph{Schedule.} Algorithm~\ref{alg:planner} of the main text, with the objects of this appendix in place: the predictive likelihoods and the one-count posteriors $\mathrm{Dir}(\alpha_A+e_{yx\theta_c})$, $\mathrm{Dir}(\alpha_B+e_{x'xu\theta_c})$ are built once per planning call from the current counts; each sweep rebuilds the factor functions~\eqref{eq:kernel_obs}--\eqref{eq:kernel_dyn}, runs one forward--backward pass of~\eqref{eq:app_msg_rule}--\eqref{eq:app_singleton_beliefs} over the chain, applies the refresh~\eqref{eq:app_refresh}--\eqref{eq:app_refresh_tilts} and evaluates $F_T$; the policy is read from $q(u_1\mid x_0)$ at convergence. $F_T$ is a monitoring quantity, not a certificate: block-coordinate descent on a non-convex functional with lagged factors is not guaranteed monotone, and period-two orbits appear as non-monotone $F_T$, which is the criterion by which damping is selected. Uniform $r_t$, zero weights and flat parameter beliefs recover plain belief propagation.

\subsection{Generic Scheme for Arbitrary $T$}
\label{app:mp_generic}

The corrections~\eqref{eq:deltaF} are additive in $t$ and every object of Section~\ref{sec:family} is per-factor, so the family carries one block belief per factor, one shared context belief, and two global, frozen Dirichlet beliefs,
\begin{equation}
    \label{eq:app_mf_T}
    \{q_{yx,t},\,q_{\mathrm{trip},t}\}_{t=1}^{T},\quad\text{singletons incl.\ }q(\theta_c),\quad q_{\theta_A}=\mathrm{Dir}(\alpha_A),\ q_{\theta_B}=\mathrm{Dir}(\alpha_B).
\end{equation}
The one-count posteriors and $m_A,m_B$ are time-independent; the cavities, lagged conditionals and weights are time-indexed. Lemmas~\ref{lem:score}--\ref{lem:cavity} apply per step and give the factor functions~\eqref{eq:kernel_obs}--\eqref{eq:kernel_dyn} at every $t$. The graph is the temporal chain of~\cite{nuijten2026whattypeinference} with $\theta_c$ attached to every factor, so the sweep is loopy belief propagation as there: with state potential $\phi_t(x_t)=\hat{p}_x(x_t)\,\mu_{\mathrm{obs}_t\to x_t}(x_t)$ and context messages $\mu_{\theta_c\to a}=q(\theta_c)/\mu_{a\to\theta_c}$,
\begin{align}
    \overrightarrow{m}_t(x_t) &\propto \sum_{x_{t-1},u_t,\theta_c} \bar{g}_{\mathrm{dyn},t}\,p(u_t)\,\mu_{\theta_c\to\mathrm{dyn}_t}\,\phi_{t-1}(x_{t-1})\,\overrightarrow{m}_{t-1}(x_{t-1}), \label{eq:app_fwd}\\
    \overleftarrow{m}_t(x_t) &\propto \sum_{x_{t+1},u_{t+1},\theta_c} \bar{g}_{\mathrm{dyn},t+1}\,p(u_{t+1})\,\mu_{\theta_c\to\mathrm{dyn}_{t+1}}\,\phi_{t+1}(x_{t+1})\,\overleftarrow{m}_{t+1}(x_{t+1}), \label{eq:app_bwd}
\end{align}
with $\overrightarrow{m}_0=p(x_0)$, $\phi_0\equiv1$, $\overleftarrow{m}_T\equiv1$; then $q(x_t)\propto\phi_t\,\overrightarrow{m}_t\,\overleftarrow{m}_t$, the block beliefs follow from~\eqref{eq:app_block_beliefs} with $\mu_{x_t\to\mathrm{obs}_t}=\hat{p}_x\overrightarrow{m}_t\overleftarrow{m}_t$, $\mu_{x_{t-1}\to\mathrm{dyn}_t}=\phi_{t-1}\overrightarrow{m}_{t-1}$, $\mu_{x_t\to\mathrm{dyn}_t}=\phi_t\overleftarrow{m}_t$, and $q(\theta_c)$ from~\eqref{eq:app_singleton_beliefs}. At $T{=}1$ this reduces to~\eqref{eq:app_msg_obs}--\eqref{eq:app_singleton_beliefs}. Because every copy is drawn i.i.d.\ from the same frozen $\mathrm{Dir}(\alpha)$, novelty is scored per factor against the same belief, with no within-horizon diminishing returns; replanning after every real step bounds this to one horizon, and the causal alternative $\alpha_t=\alpha+\sum_{s<t}\bar{n}_s$ (assumed-density filtering along the horizon) retains conjugacy but makes the posteriors and $m$ per-sweep objects. Salience has no such issue: $\theta_c$ is shared, so a cue at step $s$ updates $q(\theta_c)$ for every step.

\subsection{Limits}
\label{app:mp_limit}

All Dirichlet-dependent quantities are available in closed form, so conjugacy is exact and the block-layer cost is independent of the Dirichlet parameter-space size. As $q_{\theta_A},q_{\theta_B}\to\delta_{\theta^*}$, posterior and cavity coincide, both weights vanish pointwise, $m\to p(\cdot\mid\theta^*,\theta_c)$, and~\eqref{eq:kernel_obs}--\eqref{eq:kernel_dyn} become the EFE factor functions (13a)--(13b) of~\cite{nuijten2026messagepassingrealizationexpected,nuijten2026whattypeinference} with their enumerable $\theta$ read as $\theta_c$: inference over $K$ known models is the point-mass limit of learning $K$ matrix pairs. The approach is continuous, $D_{\mathrm{KL}}[\mathrm{Dir}(\alpha{+}e)\,\|\,\mathrm{Dir}(\alpha{+}\bar{n})]\to0$ as counts grow. At $K=1$ the salience ratio is identically one and the scheme reduces to Dirichlet learning alone, the setting of Section~\ref{sec:experiments}. The two epistemic drives differ in scale: novelty is $O(1/\alpha_0)$ and vanishes as counts grow, salience is scale-free and vanishes only when $\theta_c$ is resolved; novelty couples no columns, salience couples all of them through the single shared $\theta_c$.

\section{Experimental Details}
\label{app:experiments}

\subsection{Environments and Protocol}
\label{app:exp_setup}

\paragraph{Shared protocol.} Four cardinal moves (N, E, S, W; walls and grid edges bump), $K=1$, one unknown block initialized as a flat Dirichlet with mass $0.1$ per entry, the known block clamped by counts of $10^{6}$ times the true matrix. Every planner runs the same exact filter under the posterior-mean matrices $m_A,m_B$ and the count update of Algorithm~\ref{alg:planner}, line~10. Planning: $100$ sweeps at damping $0.5$, uniform $\hat{p}_y$, horizon $H_s=\min(H,\,T-s+1)$ at step $s$ of a $T$-step episode as in~\cite{nuijten2026messagepassingrealizationexpected}, argmax of $q(u_1)$; counts persist across episodes, pose and belief reset, seeds are shared across planners. EFE enumerates the $4^{H}$ open-loop policies under $m_A,m_B$ and takes the argmax of $\exp(-G)$, $G$ being the per-policy sum of~\eqref{eq:efe}.

\paragraph{Fog-walls maze.} $5\times5$ cells $(x,y)$, $y$ upwards; start $(0,2)$, goal $(4,2)$. $A$ is the identity except on the fog block $(1,1)$, $(1,2)$, $(1,3)$, $(2,2)$, $(3,2)$, whose columns are uniform over the block, and the noisy cell $(2,0)$ (mass $0.2$ on itself, $0.8$ over its neighbours). $B$ is deterministic except at the drift cell $(1,2)$: intended move w.p.\ $0.4$, north or south w.p.\ $0.3$ each. Cells $(2,1)$, $(2,3)$, $(3,1)$, $(3,3)$ are walls: the direct 4-step route crosses drift and fog, the 8-step top and bottom rows do not (the latter passes the noisy cell). $T=H=8$; episodes end at the goal. Learn-$A$: all 25 columns of $A$ flat, $B$ known; the matrix error is the mean of $|m_A-A|$ over all entries. This is the fog maze of~\cite{nuijten2026messagepassingrealizationexpected} with its four sinks replaced by walls.

\paragraph{Unmapped detour, doors variant.} $5\times4$ grid ($N=20$), start $(0,1)$, goal $(4,2)$, identity $A$, walls $(1,2)$, $(2,2)$, $(3,2)$. The shortcut $\to\to\to\to\uparrow$ ends at the ice $(4,1)$, whence NORTH reaches the goal w.p.\ $p=0.5$ and otherwise slips into the absorbing dead end $(4,0)$. The detour $\uparrow\uparrow\to\to\to\to\downarrow$ is deterministic through $(0,2)$, $(0,3)$, $(1,3)$, $(2,3)$, $(3,3)$, $(4,3)$; exactly its six $(\mathrm{cell},\mathrm{action})$ columns start flat, the other actions at those cells (bumps or the step back) are known. $T=8$, $H=7$; episodes end at the goal or the dead end, the last transition folded into the counts. \emph{Detour} = goal reached with every visited cell on the detour; $\mathrm{err}_B$ = mean $|m_B-B|$ over the six door columns (prior value $0.095$). Frozen $B$ keeps the door columns flat and never updates them; the \emph{cells} variant flattens all four actions of the six cells.

\subsection{Controls and Robustness}
\label{app:exp_controls}

\paragraph{Fog-walls, $A,B$ known} (30 seeds). At $H=8$ every SP-EFE variant puts $q(u_1)$ entirely on NORTH and takes the top row on 30/30 seeds; SP-Pref-Plan ($q(\mathrm{EAST})=0.73$) and SP-Pref ($0.32$) go into the fog and score 18/30 and 17/30: they plan as if they will know where they land, and the planning correction sharpens that commitment. At $H=11$ bumping after a wrong landing still reaches the goal and the ceiling closes to 29/30 (SP-Pref 22/30). The learn-$A$ policies of SP-Pref(-Plan) equal their known-$A$ policies: their goal rate is the luck of the drift, not a function of $m_A$.

\paragraph{Unmapped detour.} Known $B$: SP-Pref ($q(\mathrm{EAST})=0.30$) takes the ice because the uncorrected Bethe term $-\mathbb{H}[q(u_t\mid x_{t-1})]$ rewards states in which all actions look alike, which absorbing cells do: a sink costs $4-\log4\approx2.6$ nats per step instead of $4$. Frozen $B$: SP-Pref-Plan gambles ($q(\mathrm{EAST})=0.70$, zero door visits); $\text{SP-EFE}^{-\mathrm{dyn}}$ farms the first door (0 goals: $\Delta_B$ never decays); SP-EFE, $\text{SP-EFE}^{-\Delta_B}$ and SP-Pref reach the goal via the detour on 300/300 episodes \emph{without learning}: with frozen flat doors ``most uncertain column'' and ``detour'' are the same action at every cell, so any entropy follower is walked along the route ($^\dagger$ in Table~\ref{tab:results}).

\paragraph{Robustness.} Action indices permuted (W, S, E, N; 200 episodes): $\text{SP-EFE}^{-\mathrm{dyn}}$ 198, SP-Pref 101 (25 detours, then the ice), SP-Pref-Plan 89. Mirrored $6\times4$ layout, 7 doors, $H=8$ (300 episodes $\times$ 3 seeds): SP-EFE 300, $\text{SP-EFE}^{-\mathrm{dyn}}$ 299, SP-Pref $200\pm6$, SP-Pref-Plan $150\pm5$, EFE 100/100. Cells variant (200 episodes $\times$ 5 seeds): $\text{SP-EFE}^{-\mathrm{dyn}}$ 159, SP-Pref-Plan $97.6\pm12.5$ with $m_B$ bit-identical to its prior, SP-Pref 185 (off-path flat columns pay its bonus forever), SP-EFE 35.

\AtEndDocument{\typeout{get arXiv to do 4 passes: Label(s) may have changed. Rerun}}
\end{document}